\newcommand{\onlypaper}[1]{\iftoggle{onlypaper}{#1}{}}
\newcommand{\onlyreport}[1]{\iftoggle{onlyreport}{#1}{}}
\newcommand{\naive}{\textrm{naive}}
\newcommand{\grounded}{\textrm{grounded}}
\newcommand{\complete}{\textrm{complete}}
\newcommand{\SCOOCnaive}{\textrm{SCOOC-naive}}
\newcommand{\CFtwo}{\textrm{CF2}}
\newcommand{\SCFtwo}{\textrm{SCF2}}
\newcommand{\NSA}{\textit{NSA}}
\newcommand{\nsa}{\textrm{nsa}}
\newcommand{\Ar}{\textit{Ar}}
\newcommand{\att}{\textit{att}}
\newcommand{\SCCs}{\textit{SCCs}}
\newcommand{\scc}{\textrm{scc}}
\begin{document}
\title{SCF2 -- an Argumentation Semantics\\ for Rational Human Judgments\\ on Argument Acceptability:\\ Technical Report}

\titlerunning{SCF2 -- an Argumentation Semantics for Rational Human Judgments}
%
\author{Marcos Cramer\inst{1} 
\and Leendert van der Torre\inst{2}}
%
%
\institute{International Center for Computational Logic, TU Dresden, Germany \\
\email{marcos.cramer@tu-dresden.de} \and
Computer Science and Communications, University of Luxembourg, Esch-sur-Alzette, Luxembourg \\
\email{leon.vandertorre@uni.lu}}

\maketitle

\begin{abstract}
In abstract argumentation theory, many argumentation semantics have been proposed for evaluating argumentation frameworks. This paper is based on the following research question: Which semantics corresponds well to what humans consider a rational judgment on the acceptability of arguments? There are two systematic ways to approach this research question: A normative perspective is provided by the principle-based approach, in which semantics are evaluated based on their satisfaction of various normatively desirable principles. A descriptive perspective is provided by the empirical approach, in which cognitive studies are conducted to determine which semantics best predicts human judgments about arguments. In this paper, we combine both approaches to motivate a new argumentation semantics called SCF2. For this purpose, we introduce and motivate two new principles and show that no semantics from the literature satisfies both of them. We define SCF2 and prove that it satisfies both new principles. Furthermore, we discuss findings of a recent empirical cognitive study that provide additional support to SCF2.
\end{abstract}

\section{Introduction}
\label{sec:intro}

The formal study of argumentation is an important field of research within AI~\cite{rahwan2009argumentation}. A central focus of this field has been the idea of Dung~\cite{dung1995acceptability} that under some conditions, the acceptance of arguments depends only on a so-called {\em attack} relation among the arguments, and not on the internal structure of the arguments. Dung called this approach {\em abstract} argumentation and called the directed graph that represents the arguments as well as the attack relation between them an \emph{argumentation framework} (\emph{AF}). Whether an argument is deemed acceptable depends on the decision about other arguments. Therefore the basic concept in abstract argumentation is a {\em set} of arguments that can be accepted together, called an {\em extension}. Crucially, there may be several of such extensions, and these extensions may be incompatible. An \emph{extension-based argumentation semantics} takes as input an AF and produces as output a set of extensions.

Two classes of extension-based argumentation semantics have been studied. Dung himself introduced several examples of so-called {\em admissibility-based} semantics, formalizing the idea that an argument is acceptable in the context of an extension if the extension \emph{defends} the argument, i.e.\ attacks all the attackers of the argument. In this paper we consider his grounded, complete, preferred, and stable semantics. Moreover, we consider the  admissibility-based semantics known as semi-stable semantics~\cite{verheij1996two,DBLP:journals/logcom/CaminadaCD12}.
The other kind of extension-based argumentation semantics are {\em naive-based} semantics, which are based on the idea that acceptable arguments sets are specific maximal conflict-free sets. In this paper we consider the naive, stage, CF2 and stage2 semantics and develop a new naive-based semantics called SCF2.


Abstract argumentation has various potential applications \cite{rahwan2009argumentation}, and the choice of the semantics depends on the envisioned application. In this paper, we focus on the following research question: Which semantics corresponds well to what humans consider a rational judgment on the acceptability of arguments? 

There are two systematic ways to approach this research question: A normative perspective is provided by the \emph{principle-based approach} \cite{baroni2007principle}, in which semantics are evaluated based on their satisfaction of various normatively desirable principles. A descriptive perspective is provided by the \emph{empirical approach} \cite{rahwan2010behavioral}, in which cognitive studies are conducted to determine which semantics best predicts human judgments about arguments. In this paper, we combine both approaches. 

Two recent empirical cognitive studies on argumentation semantics by Cramer and Guillaume \cite{cramer2018empirical,cramer2019empirical} showed CF2 to be better predictors of human argument evaluation than admissibility-based semantics like grounded and preferred. This finding sheds some doubt on principles that are only satisfied by admissibility-based semantics, e.g.\ Admissibility, Defence and Reinstatement as defined by van der Torre and Vesic \cite{vanderTorre18}. 
For this reason, in this paper we focus on another existing principle, namely Directionality, and introduce two new ones.


The first new principle we consider is \emph{Irrelevance of Necessarily Rejected Arguments} (\emph{INRA}). Informally, INRA says that if an argument is attacked by every extension of an AF, then deleting this argument should not change the set of extensions. The idea here is that an argument that is attacked by every extension would be rejected by any party in a debate, and hence would never be brought up in a debate. Hence, it should be treated as if it did not even exist.

The second principle that we consider is \emph{Strong Completeness Outside Odd Cycles} (\emph{SCOOC}). Informally, SCOOC says that if an argument $a$ and its attackers are not in an odd cycle, then an extension not containing any of $a$'s attackers must contain $a$. The principle is based on the idea that it is generally desirable that an argument that is not attacked by any argument in a given extension should itself be in that extension. While it is possible to ensure this 
property in AFs without odd cycles, this is not the case for AFs involving an odd cycle. The idea behind the SCOOC principle is to still satisfy this property as much as possible, i.e.\ whenever the argument under consideration and its attackers are not in an odd cycle. 

We show that of the nine common semantics mentioned above, the only ones that satisfy INRA are grounded, complete and naive semantics, Additionally, we show that a variant of CF2 that we call $\nsa(\CFtwo)$ and that consists of first deleting all self-attacking arguments and then applying CF2 semantics also satisfies INRA. 

Furthermore, we show that of these ten semantics (the nine mentioned at the beginning as well as $\nsa(\CFtwo)$), the only one that satisfies SCOOC is the stable semantics. But stable semantics satisfies neither Directionality nor INRA.
The fact that none of the considered existing semantics satisfies both new principles introduced in this paper raises the question whether these two principles can be satisfied in conjunction. We answer this question positively by defining a novel semantics called \emph{SCF2 semantics} that satisfies both of them.

Finally, we discuss findings of a recent cognitive study by Cramer and Guillaume \cite{cramer2019empirical} whose results suggest that SCF2 is more in line with the judgments of participants than any existing semantics. So our hypothesis that SCF2 corresponds well to what humans consider a rational judgment on the acceptability of arguments is motivated not only by theoretical but also by empirical observations. The robustness of these preliminary empirical findings will need to be tested in future studies.

\onlypaper{All proofs of theorems in this paper can be fond in a technical report \cite{SCF2technicalreport}.}

\section{Preliminaries}
\label{sec:prelim}

In this section we define required notions from abstract argumentation theory \cite{dung1995acceptability,Baroni18}. 
Additionally, we define three principles from the literature on principle-based argumentation \cite{baroni2007principle,vanderTorre18} and present an argument for the case that the Directionality principle is a desirable property for a semantics designed to match what humans would consider a rational judgment on the acceptability of arguments.


\begin{definition}
 An \emph{argumentation framework (AF)} $F = \langle \Ar,\att \rangle$ is a finite directed graph in which the set $\Ar$  of vertices is considered to represent arguments and the set $\att$ of edges is considered to represent the attack relation between arguments, i.e.\ the relation between a counterargument and the argument that it counters.
\end{definition}

\begin{definition}
 An $\att$-path is a sequence $\langle a_0, \dots, a_n \rangle$ of arguments where \linebreak $(a_i,a_{i+1}) \in \att$ for $0 \leq i < n$ and where $a_j \neq a_k$ for $0 \leq j < k \leq n$ with either $j \neq 0$ or $k \neq n$. An \emph{odd $\att$-cycle} is an $\att$-path $\langle a_0, \dots, a_n \rangle$ where $a_0 = a_n$ and $n$ is odd. 
\end{definition}

\begin{definition}
Let $F = \langle \Ar,\att \rangle$ be an AF, and let $S \subseteq \Ar$. We write $F|_S$ for the restricted AF $\langle S, \att \cap (S \times S) \rangle$. The set $S$ is called \emph{conflict-free} iff there are no arguments $b,c \in S$ such that $b$ attacks $c$ (i.e. such that $(b,c) \in \att$). 
Argument $a \in \Ar$ is \emph{defended} by $S$ iff for every $b \in \Ar$ such that $b$ attacks $a$ there exists $c \in S$ such that $c$ attacks $b$.
We say that $S$ \emph{attacks} $a$ if there exists $b \in S$ such that $b$ attacks $a$, and
we define $S^+ = \{a \in \Ar \mid S \mbox{ attacks } a \}$ and $S^- = \{a \in \Ar \mid a \mbox{ attacks some } b \in S \}$.
\begin{itemize}
\item $S$ is a \emph{complete extension} of $F$ iff it is conflict-free, it defends all its arguments and it contains all the arguments it defends. 
\item $S$ is a \emph{stable extension} of $F$ iff it is conflict-free and it attacks all the arguments of $\Ar \setminus S$.
\item $S$ is the \emph{grounded extension} of $F$ iff it is a minimal with respect to set inclusion complete extension of $F$. 
\item $S$ is a \emph{preferred extension} of $F$ iff it is a maximal with respect to set inclusion complete extension of $F$.
\item $S$ is a semi-stable extension of $F$ iff it is a complete extension and 
there exists no complete extension $S_1$ such that $S \cup S^+ \subset S_1 \cup S_{1}^{+}$. 
\item $S$ is a stage extension of $F$ iff $S$ is a conflict-free set and 
there exists no conflict-free set $S_1$ such that $S \cup S^+ \subset S_1 \cup S_{1}^{+}$. 
\item $S$ is a naive extension of $F$ iff $S$ is a maximal conflict-free set. 
\end{itemize}
\end{definition}

CF2 semantics was first introduced by Baroni~\textit{et~al.}~\cite{baroni2005scc}. The idea behind it is that we partition the AF into \emph{strongly connected components} and recursively evaluate it component by component by choosing maximal conflict-free sets in each component and removing arguments attacked by chosen arguments. We formally define it following the notation of Dvo\v{r}\'ak~and~Gaggl~\cite{Dvorak16}. For this we first need some auxiliary notions:

\begin{definition}
Let $F = \langle \Ar,\att \rangle$ be an AF, and let $a,b \in \Ar$. We define $a \sim b$ iff either $a=b$ or there is an \att-path from $a$ to $b$ and there is an \att-path from $b$ to $a$. The equivalence classes under the equivalence relation $\sim$ are called \emph{strongly connected components} (SCCs) of $F$. We denote the set of SCCs of $F$ by $\SCCs(F)$. Given $S \subseteq \Ar$, we define $D_F(S) := \{b \in \Ar \mid \exists a \in S: (a,b) \in \att \land a \not \sim b \}$. 
\end{definition}

The simplified SCC-recursive scheme used for defining CF2 and stage2 is a function that maps a semantics $\sigma$ to another semantics $\scc(\sigma)$:

\begin{definition}
\label{def:SCC}
 Let $\sigma$ be an argumentation semantics. The argumentation semantics $\scc(\sigma)$ is defined as follows. Let $F = \langle \Ar,\att \rangle$ be an AF, and let $S \subseteq \Ar$. Then $S$ is an $\scc(\sigma)$-extension of $F$ iff either
 \begin{itemize}
\setlength\itemsep{0pt}
\setlength{\parskip}{0pt}
\vspace{-1.5mm}
  \item $|\SCCs(F)| = 1$ and $S$ is a $\sigma$-extension of $F$, or
  \item $|\SCCs(F)| > 1$ and for each $C \in \SCCs(F)$, $S \cap C$ is an $\scc(\sigma)$-extension of $F|_{C \setminus D_F(S)}$.
 \end{itemize}
\end{definition}

\emph{CF2 semantics} is defined to be $\scc(\textit{naive})$, and \emph{stage2} semantics is defined to be $\scc(\textit{stage})$.

%
%
%


Apart from the function $\scc$, we introduce a further function -- called $\nsa$ -- that also maps a semantics to another semantics. Informally, the idea behind $\nsa(\sigma)$ is that we first delete all self-attacking arguments and then apply $\sigma$. For defining $\nsa$ formally, we first need an auxiliary definition:

\begin{definition}
 Let $F = \langle \Ar,\att \rangle$ be an AF. We define the \emph{non-self-attacking restriction} of $F$, denoted by $\NSA(F)$, to be the AF $F|_{\Ar'}$, where $\Ar' := \{ a \in \Ar \mid (a,a) \notin \att \}$. 
\end{definition}

\begin{definition}
 Let $\sigma$ be an argumentation semantics. The argumentation semantics $\nsa(\sigma)$ is defined as follows. Let $F = \langle \Ar,\att \rangle$ be an AF, and let $S \subseteq \Ar$. We say that $E$ is an $\nsa(\sigma)$-extension of $F$ iff $E$ is a $\sigma$-extension of $\NSA(F)$.
\end{definition}

We now define the Directionality principle introduced by Baroni and Giacomin~\cite{baroni2007principle}. For this, we first need an auxiliary notion:

\begin{definition}
 Let $F = \langle \Ar,\att \rangle$ be an AF. A set $U \subseteq \Ar$ is \emph{unattacked} iff there exists no $a \in \Ar \setminus U$
such that $a$ attacks some $b \in U$. 
\end{definition}

\begin{definition}
 A semantics $\sigma$ satisfies the \emph{Directionality} principle iff for every AF $F$ and every unattacked set $U$, it holds that $\sigma({F|}_{U}) = \{ E \cap U \mid E \in \sigma(F) \}$.
\end{definition}

The Directionality principle corresponds to an important feature of the human practice of argumentation, namely that if a person has formed an opinion on some arguments and is confronted with new arguments, 
they will only feel compelled to reconsider their judgment on the prior arguments if one of the new arguments attacks one of the prior arguments. 
Apart from our own intuition, we can also refer to the results of an empirical cognitive study on argumentation that shows that humans are able to systematically judge the directionality of attacks between arguments \cite{cramer2018directionality}. 
Thus we consider the Directionality principle crucial for the goal that we focus on in this paper.

\section{Two New Principles}
\label{sec:principles}
The first new principle we consider is \emph{Irrelevance of Necessarily Rejected Arguments} (\emph{INRA}). Informally, INRA says that if an argument is attacked by every extension of an AF, then deleting this argument should not change the set of extensions. The idea here is that an argument that is attacked by every extension would not be held by any party, and hence would never be brought forwards in a debate. Hence, it should be treated as if it did not even exist.

In order to formally define the INRA principle, we first need to define a notation for an AF with one argument deleted:

\begin{definition}
 Let $F = \langle \Ar,\att \rangle$ be an AF and let $a \in \Ar$ be an argument. Then $F_{-a}$ denotes the restricted AF $F|_{\Ar \setminus \{a\}}$.
\end{definition}

\begin{definition}
 Let $\sigma$ be an argumentation semantics. We say that $\sigma$ satisfies \emph{Irrelevance of Necessarily Rejected Arguments} (\emph{INRA}) iff for every AF $F = \langle \Ar,\att \rangle$ and every argument $a \in \Ar$, if every $E \in \sigma(F)$ attacks $a$, then $\sigma(F) = \sigma(F_{-a})$.
\end{definition}

The second principle that we consider is \emph{Strong Completeness Outside Odd Cycles} (\emph{SCOOC}). Informally, SCOOC says that if an argument $a$ and its attackers are not in an odd cycle, then an extension not containing any of $a$'s attackers must contain $a$. 


In order to formally define the Strong Completeness Outside Odd Cycles principle, we first need to define the auxiliary notion of a set of arguments being \emph{strongly complete outside odd cycles}.


\begin{definition}
 Let $F = \langle \Ar,\att \rangle$ be an AF, and let $A \subseteq \Ar$. We say that $A$ is \emph{strongly complete outside odd cycles} iff for every argument $a \in \Ar$, if no argument in $\{a\} \cup \{a\}^-$ is in an odd $\att$-cycle and $A \cap \{a\}^- = \emptyset$, then $a \in A$.
\end{definition}

\begin{definition}
 Let $\sigma$ be an argumentation semantics. We say that $\sigma$ satisfies \emph{Strong Completeness Outside Odd Cycles (SCOOC)} iff for any AF $F$, every $\sigma$-extension of $F$ is strongly complete outside odd cycles.
\end{definition}

The SCOOC principle is related to the property of \emph{strong completeness}: An extension $E$ is \emph{strongly complete} iff every argument not attacked by $E$ is in $E$. We call this property \emph{strong completeness} as it is a strengthening of completeness, which states that every argument defended by $E$ is in $E$. 

The stable semantics is the only widely studied argumentation semantics that satisfies strong completeness. More precisely, the stable semantics can be characterized by the conjunction of conflict-freeness and strong completeness. In other words, one can say that the stable semantics is motivated by the idea that a violation of strong completeness constitutes a paradox and should therefore be avoided.

The stable semantics satisfies strong completeness at the price of allowing for situations in which there are no extensions and hence no judgment can be made on any argument whatsoever. Such cases are always due to odd $\att$-cycles. So we can say that odd $\att$-cycles -- unless resolved through arguments attacking the odd cycle -- cause paradoxical situations. The idea of most semantics other than stable semantics is to somehow contain these paradoxes so that they do not affect our ability to make judgments about completely or sufficiently unrelated arguments. 

The idea of the SCOOC principle is that while in odd cycles we may not be able to avoid a paradoxical judgments about the arguments, i.e.\ a judgment in which an argument is not accepted even though none of its attackers is accepted, such paradoxical judgments should be completely avoided outside of odd cycles.

How does that differ from the containment of paradoxical situations provided by existing semantics? Admissibility-based semantics do not allow for any judgment about an argument in an unattacked odd cycle; however this undecided status is not limited to odd cycles, but carries forward to arguments that are not in an odd cycle but that are $\att$-reachable from an odd cycle. 

Naive-based semantics like CF2, stage and stage2 allow for judgments about arguments in an unattacked odd cycle, but also at the cost of affecting the way arguments that are not in odd cycles are interpreted. For example, CF2 allows for a six-cycle to be interpreted in a doubly paradoxical way despite the fact that it is an even cycle that can be interpreted in a non-paradoxical manner\onlyreport{ (see Figure \ref{fig:6cycle} below)}. This behavior of CF2 was also considered problematic by  Dvo\v{r}\'ak~and~Gaggl~\cite{Dvorak16}, who used this example to motivate their stage2 semantics, but as \onlypaper{established by Theorem \ref{thm:violateSCOOC} below}\onlyreport{we will show in Figure \ref{fig:stage2ex} below}, stage2 also fails to avoid paradoxical judgments about arguments that are not themselves involved in an odd cycle. 

The SCOOC principle was designed to systematically identify whether a semantics suffers from this problem. As it turns out, all the standard semantics other than stable do suffer from the problem, i.e.\ do not satisfy SCOOC.

We will now look at which semantics satisfy or do not satisfy each of the two principles that we have defined. 

%
%

\begin{theorem}
\label{thm:IUA}
 The grounded, complete, naive and $\nsa(\CFtwo)$ semantics satisfy INRA.
\end{theorem}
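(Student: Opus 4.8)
The plan is to verify INRA separately for each of the four semantics. For \emph{naive} semantics, assume every naive extension of $F$ attacks $a$. Since a set not containing $a$ is conflict-free in $F$ iff it is conflict-free in $F_{-a}$, and a witness to non-maximality carries over between $F$ and $F_{-a}$, one immediately gets $\naive(F)\subseteq\naive(F_{-a})$ (every naive extension of $F$ omits $a$, being in conflict with it) together with all of the converse inclusion except the possibility that some $E\in\naive(F_{-a})$ could be enlarged by $a$ in $F$; but then $E\cup\{a\}$ would extend to a naive extension of $F$ containing $a$, which by hypothesis attacks $a$, contradicting conflict-freeness. For \emph{grounded} and \emph{complete} semantics the core facts are: (A) a complete extension $E$ of $F$ with $E$ attacking $a$ is a complete extension of $F_{-a}$; (B) a complete extension $E$ of $F_{-a}$ with $E$ attacking $a$ is a complete extension of $F$ (the attack on $a$ supplies the defence needed when $a$ attacks an argument of $E$, and $E$ can neither contain $a$ nor defend $a$ without breaking conflict-freeness); and (C) the iterates from $\emptyset$ of the ``defended by'' operator of $F$ are pointwise contained in those of $F_{-a}$ provided $a$ is not in the grounded extension of $F$ — so the grounded extension $G$ of $F$ is contained in the grounded extension $G'$ of $F_{-a}$. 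For grounded the hypothesis gives that $G$ attacks $a$, hence $a\notin G$, hence $G\subseteq G'$ by (C) and $G'\subseteq G$ by (A), so $\grounded(F)=\grounded(F_{-a})$. For complete the hypothesis gives that \emph{every} complete extension of $F$ attacks $a$; applied to $G$ this gives $a\notin G$, hence $G\subseteq G'$ by (C), hence $G'$ (containing $G$) attacks $a$, hence every complete extension of $F_{-a}$ attacks $a$; then (A) and (B) give $\complete(F)=\complete(F_{-a})$.

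For $\nsa(\CFtwo)$, recall $\nsa(\CFtwo)(F)=\CFtwo(\NSA(F))$ and put $G:=\NSA(F)$. If $a$ is self-attacking in $F$ then $\NSA(F_{-a})=\NSA(F)=G$, so the conclusion holds unconditionally; otherwise $a$ is an argument of $G$, $\NSA(F_{-a})=G_{-a}$, and every $\CFtwo$-extension of $G$ attacks $a$ inside $G$ (the attacking edge joins two non-self-attacking arguments, so it survives in $G$). Hence it suffices to prove the lemma: \emph{if $G$ has no self-attacking arguments and every $\CFtwo$-extension of $G$ attacks $a$, then $\CFtwo(G)=\CFtwo(G_{-a})$}. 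I would prove this by induction on the number of arguments of $G$, using $\CFtwo=\scc(\naive)$. If $G$ has a single SCC the premise is vacuously false, since $\{a\}$ is conflict-free and hence lies in some maximal conflict-free set, i.e.\ some $\CFtwo$-extension, which then does not attack $a$; the same reasoning applies whenever $a$ lies in an initial SCC of $G$, because one can build a $\CFtwo$-extension of $G$ containing $a$ by choosing in the SCCs in topological order (consistently, since for an SCC $C$ the set $D_G(S)\cap C$ depends only on the part of $S$ in SCCs preceding $C$). So assume $|\SCCs(G)|>1$ and $a$ lies in no initial SCC, and fix an initial SCC $C_0$ (so $a\notin C_0$). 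Using the standard ``peeling'' reformulation of $\CFtwo$ — its extensions are exactly the sets $M\cup T$ with $M$ a maximal conflict-free set of $G|_{C_0}$ and $T$ a $\CFtwo$-extension of the AF obtained from $G$ by deleting $C_0$ and every further argument attacked by $M$ — I would compare this decomposition for $G$ and for $G_{-a}$. When $M$ attacks $a$, the residual AFs for $G$ and $G_{-a}$ coincide (in both, $a$ is already deleted), so the contributions are equal. When $M$ does not attack $a$, the argument $a$ survives into the residual AF of $G$, every $\CFtwo$-extension of which attacks $a$ (the attacker of $a$ inside a $\CFtwo$-extension of $G$ must lie outside $C_0$, as $M$ does not attack $a$); the induction hypothesis then equates the extensions of this residual AF with those of its $a$-deletion, which is exactly the residual AF for $G_{-a}$. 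Summing over all $M$ yields $\CFtwo(G)=\CFtwo(G_{-a})$.

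I expect the main obstacle to be the $\nsa(\CFtwo)$ lemma: one has to make the peeling reformulation of $\CFtwo$ precise (equivalently, invoke the known equivalence between the simplified SCC-recursive scheme of Definition~\ref{def:SCC} and the original SCC-recursive definition of $\CFtwo$) and then carefully check that removing $a$ commutes with this recursion — i.e.\ that $D_G(\cdot)$ and the induced SCC decomposition behave compatibly under deletion of $a$. Everything else is routine case analysis, and it is the two vacuity observations that reduce the induction to a single genuine recursive case.
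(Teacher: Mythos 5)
Your proposal is correct in outline but takes a genuinely different route from the paper for three of the four semantics. The paper proves the whole theorem through one machine: it defines ``SCC-rich'' and ``SCC-semi-rich'' semantics, shows naive is semi-rich and grounded/complete are SCC-rich, and then proves a single technical lemma stating that $\scc(\sigma)$ (resp.\ $\nsa(\scc(\sigma))$) satisfies INRA whenever $\sigma$ is SCC-rich (resp.\ SCC-semi-rich); grounded and complete are then handled by invoking $\grounded=\scc(\grounded)$ and $\complete=\scc(\complete)$. Your direct arguments for grounded and complete via facts (A), (B), (C) are sound (I checked: (A) works because the hypothesized attack on $a$ is exactly what is needed to show that defence in $F_{-a}$ implies defence in $F$; (C) works because the iterates stay inside $G$ and hence avoid $a$) and are arguably cleaner, since they avoid relying on grounded and complete being fixed points of the simplified SCC-recursive operator. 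Your naive argument matches the paper's in substance; the paper just extracts $(a,a)\in\att$ first via semi-richness. For $\nsa(\CFtwo)$ your skeleton coincides with the paper's: induction on $|\Ar|$, with the key vacuity observation that $a$ cannot lie in an initial SCC (this is precisely the paper's ``SCC-semi-richness'' of naive semantics in disguise). The one caveat: your ``peeling'' reformulation of $\CFtwo$ --- extensions as $M\cup T$ with $M$ naive in an initial SCC and $T$ a $\CFtwo$-extension of the residual framework --- is true but is not a freebie, and it is not quite the same statement as the equivalence with the original SCC-recursive definition; Definition~\ref{def:SCC} evaluates all SCCs simultaneously, and the sequential peeling must be justified, in particular where deletion of $D_F(S)$ or of $a$ splits an SCC into several smaller ones. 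That splitting issue is exactly where the paper's Lemma on the SCC criterion spends essentially all of its effort (its Case~1/Case~2 analysis on $C_a$), so your proposal has not so much discharged the hard part as relocated it into the peeling lemma; you correctly flag this as the main obstacle, but be aware that proving it carefully is comparable in length to the paper's direct treatment of the simultaneous recursion.
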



\onlyreport{
Before we can prove this theorem, we first need some auxiliary definitions and lemmas.

\begin{definition}
 A semantics $\sigma$ is called \emph{SCC-rich} iff for every AF $F = \langle \Ar,\att \rangle$ such that $|\SCCs(F)| = 1$ and every argument $a \in \Ar$, there is an extension $E \in \sigma(F)$ such that $E$ does not attack $a$. 
\end{definition} 
 
\begin{definition}
 A semantics is called \emph{semi-rich} iff for every AF $F = \langle \Ar,\att \rangle$ and every argument $a \in \Ar$ such that $(a,a) \notin \att$, there is an extension $E \in \sigma(F)$ such that $E$ does not attack $a$.
\end{definition}
 
\begin{definition}
 A semantics is called \emph{SCC-semi-rich} iff for every AF $F = \langle \Ar,\att \rangle$ such that $|\SCCs(F)| = 1$ and every argument $a \in \Ar$ such that $(a,a) \notin \att$, there is an extension $E \in \sigma(F)$ such that $E$ does not attack $a$.
\end{definition}


\begin{lemma}
 \label{lem:naive-rich}
 Naive semantics is semi-rich and thus also SCC-semi-rich. 
\end{lemma}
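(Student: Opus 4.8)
The plan is to establish semi-richness directly by a greedy extension argument, and then observe that SCC-semi-richness is an immediate weakening. So let $F = \langle \Ar,\att \rangle$ be an AF and let $a \in \Ar$ with $(a,a) \notin \att$; I must produce a naive extension $E$ of $F$ with $E$ not attacking $a$.

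First I would note that $\{a\}$ is conflict-free, precisely because $(a,a) \notin \att$. Since $F$ is finite, I can enlarge $\{a\}$ step by step, at each stage adding some argument whose inclusion preserves conflict-freeness, until no such argument remains; call the result $E$. By construction $E$ is conflict-free, $a \in E$, and $E$ is maximal conflict-free (if some $b \notin E$ could be added without creating a conflict, the process would not have stopped), so $E$ is a naive extension of $F$. Then I would check that $E$ does not attack $a$: if some $b \in E$ attacked $a$, then $b$ and $a$ would both lie in $E$ with $(b,a) \in \att$, contradicting conflict-freeness of $E$. Hence no argument of $E$ attacks $a$, which is exactly what semi-richness demands.

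For the second half, SCC-semi-richness is just semi-richness with the extra hypothesis $|\SCCs(F)| = 1$; since the semi-richness statement quantifies over \emph{all} AFs, it in particular applies to those with a single SCC, so the implication is trivial. There is essentially no hard part in this lemma; the only point needing (entirely routine) care is the finiteness-based fact that every conflict-free set extends to a maximal conflict-free set, which is what lets us start the extension from $\{a\}$.
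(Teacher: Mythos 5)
Your proof is correct and follows essentially the same route as the paper's: start from the conflict-free singleton $\{a\}$, extend it to a maximal conflict-free set $E$, and observe that $E$ is a naive extension that cannot attack $a$ without violating its own conflict-freeness. The paper's version is just terser, leaving the extension-to-maximality step and the final conflict-freeness observation implicit.
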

\begin{proof}
 Let $F = \langle \Ar,\att \rangle$ be an AF and let $a \in \Ar$ be an argument such that $(a,a) \notin \att$. Then $\{a\}$ is a conflict-free set, so there is some maximal conflict-free set $E \supseteq \{a\}$. Then $E$ is a naive extenstion of $F$ that does not attack $a$.\qed
\end{proof}


\begin{lemma}
 \label{lem:com-gr}
 Grounded and complete semantics are SCC-rich.
\end{lemma}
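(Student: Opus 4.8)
The plan is to exploit the fact that a one‑SCC argumentation framework is "almost trivial" for admissibility‑based reasoning: outside of the one‑argument corner case, its grounded extension is empty, and the empty set attacks nothing. So fix an AF $F = \langle \Ar,\att \rangle$ with $|\SCCs(F)| = 1$ and an argument $a \in \Ar$; I want a grounded (resp.\ complete) extension $E$ with $a \notin E^+$.

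First I would dispose of the case $|\Ar| \geq 2$. Here I claim that no argument of $F$ is unattacked: given $b \in \Ar$, strong connectedness yields an $\att$-path from some $c \neq b$ to $b$, and the final edge of that path witnesses that $b$ has an attacker. Consequently $\emptyset$ defends no argument of $F$ (it defends only unattacked arguments, of which there are none), so $\emptyset$ is conflict‑free, vacuously defends all of its members, and vacuously contains all arguments it defends; hence $\emptyset$ is a complete extension of $F$. Since $\emptyset$ is contained in every complete extension, it is the $\subseteq$-minimal one, i.e.\ the grounded extension. As $\emptyset^{+} = \emptyset$, this set is an extension under both the grounded and the complete semantics that does not attack $a$, which is exactly what SCC‑richness requires.

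It then remains to treat the degenerate case $\Ar = \{a\}$. If $(a,a) \in \att$, then $\{a\}$ is not conflict‑free, so $\emptyset$ is the grounded extension and the only complete extension, and it does not attack $a$. If $(a,a) \notin \att$, then $\{a\}$ is conflict‑free, defends itself and all it defends, so $\{a\}$ is the grounded and the unique complete extension, and it does not attack $a$ because there is no self‑loop. In either sub‑case the required extension has been produced. I do not anticipate any real obstacle: the only step needing a moment's care is the observation that a strongly connected AF on at least two vertices has no unattacked argument, and everything else is just unfolding the definitions of the grounded and complete semantics.
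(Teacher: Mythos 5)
Your proof is correct and follows essentially the same strategy as the paper's: in the non-degenerate case every argument of a single-SCC framework is attacked, so the empty set is a grounded/complete extension that attacks nothing, and the degenerate singleton case is handled directly. The only difference is cosmetic — the paper splits on $\att = \emptyset$ versus $\att \neq \emptyset$ rather than on $|\Ar|$, which folds your self-loop sub-case into the ``empty extension'' branch.
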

\begin{proof}
 Let $F = \langle \Ar,\att \rangle$ be an AF such that $|\SCCs(F)| = 1$ and let $a \in \Ar$. We distinguish two cases:
 \begin{enumerate}[(a)]
  \item $\att = \emptyset$. In this case $\Ar$ is the only grounded and complete extension of $F$, and $\Ar$ does not attack $a$.
  \item $\att \neq \emptyset$. Since $|\SCCs(F)| = 1$, this implies that every argument is attacked by some argument. Thus $\emptyset$ is a grounded and complete extension of $F$. Since $\emptyset$ does not attack $a$, the required condition is satisfied.
\qed
 \end{enumerate}
\end{proof}


\begin{lemma}
 \label{lem:IUA-criterion}
 Let $\sigma$ be an SCC-rich or SCC-semi-rich semantics. 
 \begin{enumerate}[(a)]
  \item  If $\sigma$ is SCC-rich, then $\scc(\sigma)$ satisfies INRA. 
  \item If $\sigma$ is SCC-semi-rich, then $\nsa(\scc(\sigma))$ satisfies INRA.
 \end{enumerate}
\end{lemma}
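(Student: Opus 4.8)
The plan is to prove part~(a) by strong induction on $|\Ar|$ of the following statement: if $\sigma$ is SCC-rich, $F=\langle\Ar,\att\rangle$ is an AF and $a\in\Ar$ is such that every $\scc(\sigma)$-extension of $F$ attacks $a$, then $\scc(\sigma)(F)=\scc(\sigma)(F_{-a})$. First I would dispose of the cases in which this hypothesis cannot hold, so that INRA is satisfied vacuously. If $|\SCCs(F)|=1$ then $\scc(\sigma)(F)=\sigma(F)$ and SCC-richness immediately supplies an extension not attacking $a$, a contradiction. If the SCC $C_a$ containing $a$ is an initial SCC of $F$, then every attacker of $a$ lies in $C_a$, so applying SCC-richness to $F|_{C_a}$ yields a $\sigma$-extension of $F|_{C_a}$ not attacking $a$, and (using that SCC-richness forces $\sigma$, hence $\scc(\sigma)$, to be non-empty on every AF) one extends it to a $\scc(\sigma)$-extension of $F$ not attacking $a$, again a contradiction. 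Hence from now on $|\SCCs(F)|>1$ and we may fix an initial SCC $C_0\neq C_a$.

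The key structural ingredient, which I would establish directly from Definition~\ref{def:SCC} by induction, is that $\scc(\sigma)$ is computed by peeling off one initial SCC at a time: for $|\SCCs(F)|>1$ and $C_0$ initial,
\[
 \scc(\sigma)(F)\;=\;\bigl\{\,E_0\cup E'\;\bigm|\;E_0\in\sigma(F|_{C_0}),\ E'\in\scc(\sigma)\bigl(F|_{\Ar\setminus(C_0\cup D_F(E_0))}\bigr)\,\bigr\},
\]
with $E_0=E\cap C_0$ recoverable from $E$. I would then compare the peelings of $F$ and of $F_{-a}$ over $C_0$, using that $C_0$ remains an initial SCC of $F_{-a}$ with $(F_{-a})|_{C_0}=F|_{C_0}$, and that $D_{F_{-a}}(E_0)=D_F(E_0)\setminus\{a\}$ for $E_0\subseteq C_0$. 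Fix $E_0\in\sigma(F|_{C_0})$ and set $R_{E_0}:=F|_{\Ar\setminus(C_0\cup D_F(E_0))}$. If $E_0$ attacks $a$, then $a\in D_F(E_0)$, so $R_{E_0}$ omits $a$ and equals the remainder AF attached to $E_0$ in the peeling of $F_{-a}$, and $E_0$ contributes identically to both sides. If $E_0$ does not attack $a$, then $a\notin D_F(E_0)$, so $R_{E_0}$ contains $a$ whereas the corresponding remainder for $F_{-a}$ is $(R_{E_0})_{-a}$; from the peeling together with the global hypothesis, every $\scc(\sigma)$-extension of $R_{E_0}$ attacks $a$, and since $|\Ar(R_{E_0})|<|\Ar|$ the induction hypothesis gives $\scc(\sigma)(R_{E_0})=\scc(\sigma)((R_{E_0})_{-a})$, so $E_0$ again contributes identically. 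Summing over $E_0$ yields $\scc(\sigma)(F)=\scc(\sigma)(F_{-a})$. The single configuration in which the peeling cannot be applied to $F_{-a}$ is $|\SCCs(F_{-a})|=1$; then $\Ar=C_0\cup\{a\}$ disjointly and $F_{-a}=F|_{C_0}$, and a direct computation from Definition~\ref{def:SCC} shows the hypothesis forces every $E_0\in\sigma(F|_{C_0})$ to attack $a$ (otherwise pairing such an $E_0$ with a $\sigma$-extension of $F|_{\{a\}}$ not attacking $a$, furnished by SCC-richness, contradicts the hypothesis), so $\scc(\sigma)(F)=\sigma(F|_{C_0})=\scc(\sigma)(F_{-a})$.

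For part~(b), recall $\nsa(\scc(\sigma))(F)=\scc(\sigma)(\NSA(F))$. If $a$ is self-attacking, then $a\notin\Ar(\NSA(F))$ and $\NSA(F)=\NSA(F_{-a})$, so $\nsa(\scc(\sigma))(F)=\nsa(\scc(\sigma))(F_{-a})$ and INRA holds for $a$ unconditionally. If $a$ is not self-attacking, then $a\in\Ar(\NSA(F))$ and $\NSA(F_{-a})=(\NSA(F))_{-a}$, so it suffices to show: for the self-loop-free AF $G:=\NSA(F)$, if every $\scc(\sigma)$-extension of $G$ attacks $a$ then $\scc(\sigma)(G)=\scc(\sigma)(G_{-a})$. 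This follows by running the argument of part~(a) inside the class of self-loop-free AFs, which is closed under the restrictions used in the recursion; there every appeal to SCC-richness is made for an argument of a self-loop-free AF, i.e.\ a non-self-attacking argument, and is therefore covered by SCC-semi-richness. I expect the main obstacle to be setting up the one-initial-SCC-at-a-time characterization of $\scc(\sigma)$ and the bookkeeping of how $\SCCs(\cdot)$, the relation $\sim$ and the operator $D_F(\cdot)$ behave under deletion of $a$; granted these, the INRA comparison is a routine case distinction.
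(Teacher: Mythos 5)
Your proposal is correct in outline but organizes the argument genuinely differently from the paper's proof. The paper also inducts on $|\Ar|$, but it works directly with Definition~\ref{def:SCC}: given an extension $S$ of $F$, it verifies the defining condition for each SCC $C'$ of $F_{-a}$ by distinguishing whether $C'$ is already an SCC of $F$ or a fragment of the (possibly split) SCC $C_a$ containing $a$, with a further case split on whether the reduced framework inside $C_a$ has one or several SCCs; a separate short case handles $|\SCCs(F_{-a})|=1$. You instead route everything through a ``peel off one initial SCC at a time'' characterization of $\scc(\sigma)$ and then compare the peelings of $F$ and $F_{-a}$ over a common initial SCC $C_0\neq C_a$, applying the induction hypothesis to the remainder $R_{E_0}$. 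Granted the peeling lemma, your case analysis (whether $E_0$ attacks $a$, plus the degenerate $|\SCCs(F_{-a})|=1$ configuration) is clean, your observation that every extension of $R_{E_0}$ must attack $a$ when $E_0$ does not is the right way to discharge the induction hypothesis, and your reduction of part~(b) to part~(a) via $\NSA$ and the coincidence of SCC-richness with SCC-semi-richness on self-loop-free AFs is exactly the paper's. The caveat is that the peeling characterization is precisely where the difficulty the paper flags resides: deleting $D_F(E_0)$ (or $a$) can split SCCs, so the SCCs of the remainder framework need not be restrictions of SCCs of $F$, and showing that the two recursions nonetheless agree requires the same bookkeeping the paper carries out inline. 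The lemma is true and provable by induction from Definition~\ref{def:SCC}, but it is not a one-line consequence of it, so your proof is only as complete as that deferred lemma. Your preliminary reductions (ruling out $|\SCCs(F)|=1$ and $C_a$ initial via SCC-richness together with non-emptiness of $\scc(\sigma)$) are sound additions that the paper leaves implicit; each approach buys something -- yours a reusable structural lemma and a shorter main argument, the paper's a self-contained proof with no auxiliary claim to establish.
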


\begin{proof}
Before we provide the long technical proof of this lemma, we first sketch the proof idea.

First we observe that for showing that $\nsa(\scc(\sigma))$ satisfies INRA, it is enough to consider AFs without self-attacking arguments. But in such AFs, SCC-richness and SCC-semi-richness coincide. So we can actually assume SCC-richness for both parts of the lemma.

We consider an argument $a$ that is attacked by every extension and need to show that removing that argument from the AF will not result in the emergence of new extensions or the disappearance of any previous extensions. Due to the SCC-richness of $\sigma$, $a$ cannot be in an initial SCC. Instead, $a$ must be in a position where, whatever happens in the SCCs that come before $a$, some argument attacking $a$ will be accepted. Thus the SCC-recursive scheme removes $a$ from the computation of the semantics at that step. Since that is the case, removing $a$ from the AF will make no difference, because what happens in the SCCs that preceed $a$ will not be affected by the initial removal of $a$, and starting at the SCC that (originally) contains $a$, it makes no difference whether $a$ is initially removed from the framework or removed from the computation by the SCC-recursive scheme due to having an attacker from a previous SCC. 

The main difficulty in making this proof sketch a rigorous proof is that the removal of $a$ may change the structure of the SCCs, as the SCC containing $a$ may be split up into multiple SCCs. That complicates the argument significantly. We now show how this case can be covered in the full proof that follows.

We first prove part (a) of this lemma and then show how the proof can be adapted to prove part (B).

 Let $\sigma$ be an SCC-rich semantics. Let $F = \langle \Ar,\att \rangle$ be an AF and let $a \in \Ar$ be an argument such that for every $E \in \scc(\sigma)(F)$, $E$ attacks $a$. We need to show that $\scc(\sigma)(F) = \scc(\sigma)(F_{-a})$. 
 
 Note that $F$ must have more than one SCC, because otherwise the SCC-richness of $\sigma$ would imply that there is an extension that does not attack $a$. We now distinguish two cases:
 
 \underline{Case (i): $|SCCs(F_{-a})| = 1$.} In this case, $F$ has two SCCs, namely $\{a\}$ and $\Ar \setminus \{a\}$. Since $a$ is attacked by every extension, it must be attacked from some argument in $\Ar \setminus \{a\}$, so $a$ does not attack any argument in $\Ar \setminus \{a\}$. So $\Ar \setminus \{a\}$ is the initial SCC of $F$ and every extension of $F_{-a}$ contains an argument attacking $a$. So $\scc(\sigma)(F_{-a}) = \sigma(F_{-a}) = \scc(\sigma)(F)$.
  
 \underline{Case (ii): $|SCCs(F_{-a})| > 1$.} In this case, we prove the result by an induction over the number of arguments in $F$, so we may assume as induction hypothesis that it holds for strict subframeworks of $F$. Let $C_a$ denote the SCC of $F$ that contains $a$.  
 
 First we show that $\scc(\sigma)(F) \subseteq \scc(\sigma)(F_{-a})$. Let $S \in \scc(\sigma)(F)$. Let $C \in \SCCs(F)$. Since $|SCCs(F)| > 1$, it follows by Definition~\ref{def:SCC} that $S \cap C$ is an $\scc(\sigma)$-extension of $F|_{C \setminus D_{F}(S)}$. By the induction hypothesis, $S \cap C$ is an $\scc(\sigma)$-extension of $F|_{C \setminus (D_{F}(S) \cup \{a\})}$. So we have established that for each $C \in \SCCs(F)$, $S \cap C$ is an $\scc(\sigma)$-extension of $F|_{C \setminus (D_{F}(S) \cup \{a\})}$ (1). 
 
 We need to show that $S \in \scc(\sigma)(F_{-a})$. Let $C' \in \SCCs(F_{-a})$. By Definition \ref{def:SCC} and the fact that $|SCCs(F_{-a})| > 1$, it is enough to show that this arbitrarily chosen $C' \in \SCCs(F_{-a})$ satisfies the following property:
 
       \hspace{6mm} (*) \hspace{2mm} $S \cap C'$ is an $\scc(\sigma)$-extension of $F_{-a}|_{C' \setminus D_{F_{-a}}(S)}$. 
 
 \noindent Note that either $C' \in \SCCs(F)$ or that $C' \subseteq C_a$. We consider these two cases separately.
 
 If $C' \in \SCCs(F)$, then $F_{-a}|_C \setminus D_{F_{-a}}(S) = F|_{C \setminus (D_{F}(S) \cup \{a\})}$, so the required property (*) directly follows from (1).
 
 If $C' \subseteq C_a$, we show (*) by making the case distinction from the definition of $\scc(\sigma)$:
 
 \begin{enumerate}[\textnormal{Case} 1:]
  \item $|\SCCs(F|_{C_a} \setminus (D_{F}(S) \cup \{a\}))| =1$. Then $|\SCCs(F_{-a}|_{C_a} )| = 1$, so the single SCC of $F|_{C_a} \setminus (D_{F}(S) \cup \{a\})$ must either be fully contained in $C'$ or disjoint from $C'$. In the first case, $C' = C_a \setminus \{a\}$, so $S \cap C' = S \cap C_a$ $F|_{C_a} \setminus (D_{F}(S) \cup \{a\}) = F_{-a}|_{C'} \setminus D_{F_{-a}}(S)$. Therefore (1) applied to $C_a$ implies that property (*) holds. In the second case, $C' \subseteq D_{F_{-a}}(S)$ and $S \cap C'= \emptyset$, so (*) holds because the empty set is a $\scc(\sigma)$-extension of the empty framework.
  \item $|\SCCs(F|_{C_a} \setminus (D_{F}(S) \cup \{a\}))| > 1$. Then by Definition~\ref{def:SCC}, for each $C^* \in \SCCs(F|_{C_a} \setminus (D_{F}(S) \cup \{a\}))$, $S \cap C^*$ is an $\scc(\sigma)$-extension of $F|_{C^*} \setminus D_{F|_{C_a} \setminus (D_{F}(S) \cup \{a\})}(S)$. Let $C'' \in \SCCs(F|_{C'} \setminus D_{F}(S))$ (2). Then $C'' \in \SCCs(F|_{C_a} \setminus (D_{F}(S) \cup \{a\}))$, because $F|_{C'} \setminus D_{F}(S) \subseteq F|_{C_a} \setminus (D_{F}(S) \cup \{a\})$ (note that $C''$ cannot be expanded to a larger SCC in $F|_{C_a} \setminus (D_{F}(S) \cup \{a\})$, because $C'$ is an SCC of $F$ and would therefore have to contain this expansion of $C''$). Now (2) together with Definition~\ref{def:SCC} implies property (*). 
 \end{enumerate}
 This concludes the proof that $S \in \scc(\sigma)(F_{-a})$ and thus that $\scc(\sigma)(F) \subseteq \scc(\sigma)(F_{-a})$
 The proof that $\scc(\sigma)(F_{-a}) \subseteq \scc(\sigma)(F)$ works similarly.

 The main part of the proof of part (b) works similarly, but the beginning is a bit different:

 Let $\sigma$ be an SCC-semi-rich semantics. Let $F = \langle \Ar,\att \rangle$ be an AF and let $a \in \Ar$ be an argument such that for every $E \in \nsa(\scc(\sigma))(F)$, $a \notin E$. We need to show that $\nsa(\scc(\sigma))(F) = \nsa(\scc(\sigma))(F_{-a})$. Define $F' := \NSA(F)$. We need to show that $\scc(\sigma)(F) = \scc(\sigma)(F_{-a})$. If $a$ is not an argument in $F'$, then the result trivially holds. So suppose $a$ is in $F'$. Note that $F'$ must have more than one SCC, because otherwise the SCC-semi-richness of $\sigma$ would imply that there is an extension that does not attack $a$. 
 
 Now we continue as in the proof of part (a), just with $F'$ in place of $F$.
 \qed
\end{proof}

\renewcommand*{\proofname}{Proof of Theorem \ref{thm:IUA}}

\begin{proof}
 By Lemmas \ref{lem:naive-rich}, \ref{lem:com-gr} and \ref{lem:IUA-criterion} and the fact that $\grounded = \scc(\grounded)$, $\complete = \scc(\complete)$ and ${\nsa(\CFtwo)} = \nsa(\scc(\naive))$, it directly follows that grounded, complete and $\nsa(\CFtwo)$ satisfy INRA.
 
 We now show that naive semantics satisfies INRA. Let $F = \langle \Ar,\att \rangle$ be an AF and let $a \in \Ar$ be an argument such that for every $E \in \naive(F)$, $E$ attacks $a$. By the semi-richness of the naive semantics (Lemma \ref{lem:naive-rich}), it follows that $(a,a) \in \att$. 
 
 We need to show that $\naive(F) = \naive(F_{-a})$. Let $S \in \naive(F)$. As $a \notin S$, $S \subseteq \Ar \setminus \{a\}$. $S$ is conflict-free, and as $S$ is maximal with this property in $F$, it is also maximal with this property in $F_{-a}$. So $S \in \naive(F_{-a})$, as required.
 
 Now let $S \in \naive(F_{-a})$. $S$ is conflict-free. Since $(a,a) \in \att$, $S \cup \{a\}$ is not conflict-free. Together with the maximality of $S$ in $F_{-a}$, this implies that $S$ is a maximally conflict-free subset of $\Ar$, i.e.\ $S \in \naive(F)$, as required. \qed
\end{proof}

\renewcommand*{\proofname}{Proof}
}

\begin{theorem}
\label{thm:violateINRA}
 Stable, preferred, semi-stable, stage, stage2 and CF2 semantics violate INRA.
\end{theorem}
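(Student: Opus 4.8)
The plan is to prove the theorem by counterexamples: for each of the six semantics $\sigma$ I would exhibit an AF $F$ and an argument $a\in\Ar$ such that every $E\in\sigma(F)$ attacks $a$ (so the premise of INRA is met) while $\sigma(F)\neq\sigma(F_{-a})$. Two frameworks suffice.

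For the stable, preferred, semi-stable, stage and stage2 semantics I would use $F_1=\langle\{p,q,r,s,a\},\{(p,q),(q,r),(r,s),(s,p),(a,a),(a,p),(q,a)\}\rangle$: a four-cycle $p\to q\to r\to s\to p$, a self-attacking argument $a$, and the two extra edges $a\to p$ and $q\to a$. Deleting $a$ leaves the bare four-cycle $(F_1)_{-a}$, whose extensions under each of these five semantics are exactly $\{p,r\}$ and $\{q,s\}$. In $F_1$ itself, the argument $a$ rules out $\{p,r\}$ but not $\{q,s\}$: for preferred and semi-stable, $\{p,r\}$ fails to be admissible because $a$ attacks $p$ while neither $p$ nor $r$ attacks $a$, whereas $\{q,s\}$ attacks $a$ through $q\to a$ and stays complete; for stage, stage2 and stable, the range of $\{p,r\}$ is $\{p,q,r,s\}$ and misses $a$, whereas the range of $\{q,s\}$ is all of $\Ar$ (since $q$ attacks both $r$ and $a$, and $s$ attacks $p$). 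One checks that $F_1$ is a single SCC (so stage2 collapses to stage on it), that $(F_1)_{-a}$ is likewise a single SCC, and that $\{q,s\}$ is the unique extension of $F_1$ for each of the five semantics: it is the only maximal admissible set of $F_1$ (disposing of preferred and semi-stable) and the only conflict-free set whose range is $\Ar$ (disposing of stable, stage and stage2). Since $\{q,s\}$ attacks $a$, the INRA premise holds, yet $\sigma((F_1)_{-a})=\{\{p,r\},\{q,s\}\}\neq\{\{q,s\}\}=\sigma(F_1)$.

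For $\CFtwo$ this framework does not help, since $F_1$ has a single SCC and so $\CFtwo(F_1)=\naive(F_1)=\{\{p,r\},\{q,s\}\}$, where $\{p,r\}$ does not attack $a$; hence the premise fails. Instead I would use the three-argument AF $F_2=\langle\{a,b,c\},\{(a,a),(b,a),(a,c),(c,a),(c,b)\}\rangle$. Again $F_2$ is a single SCC, so $\CFtwo(F_2)=\naive(F_2)=\{\{b\},\{c\}\}$ (the self-attacker $a$ is in no conflict-free set, and $c\to b$ forbids $\{b,c\}$), and both extensions attack $a$ (via $b\to a$ and $c\to a$ respectively), so the premise holds. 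But $(F_2)_{-a}$ consists of the single attack $c\to b$ and has two SCCs, $\{b\}$ and $\{c\}$; unwinding Definition~\ref{def:SCC}, the SCC $\{c\}$ is unattacked, so $c$ belongs to every extension, whence $b\in D_{(F_2)_{-a}}(S)$ and $b$ is discarded, giving $\CFtwo((F_2)_{-a})=\{\{c\}\}\neq\{\{b\},\{c\}\}=\CFtwo(F_2)$.

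The only laborious part is the finite verification of these extension sets --- checking semantics by semantics that $\{p,r\}$ is genuinely excluded in $F_1$, that the SCC structures of $F_1$ and $(F_1)_{-a}$ are as claimed, and that the SCC-recursive definition really unwinds as described on $(F_2)_{-a}$. This is where an error would most easily creep in, so I would carry each computation out explicitly, but none of it is conceptually difficult.
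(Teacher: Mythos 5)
Your proposal is correct and follows essentially the same strategy as the paper: two explicit counterexamples, one handling stable, preferred, semi-stable, stage and stage2 at once, and a separate single-SCC framework with a rejected argument whose removal splits the SCC structure to handle CF2. The paper's counterexamples are smaller (three arguments each: a mutual attack $a\leftrightarrow c$ with $a\to b\to c$, and the same graph with a self-attack added on $c$ for the CF2 case), but your frameworks check out and establish the theorem in the same way.
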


\onlyreport{
\begin{proof}
A counterexample for stable, preferred, semi-stable, stage and stage2 semantics is in Figure~\ref{fig:counterex1}. A counterexample for CF2 semantics is in Figure~\ref{fig:counterexCF2}. \qed
\end{proof}

\begin{figure}[!h]
\vspace{-6mm}
\begin{center}
\begin{tikzpicture}[->,>=stealth,shorten >=1pt,auto,node distance=1.5cm,
  thick,main node/.style={circle,draw,font=\small}]
  \node[main node] (a) at (0,0) {$a$};
  \node[main node] (b) at (1.6,0) {$b$};
  \node[main node] (c) at (0.8,0.9) {$c$};
  \draw[->,>=latex] (a) to (b);
  \draw[->,>=latex] (b) to (c);
  \draw[<->,>=latex] (c) to (a);
	\end{tikzpicture} \\ 
\end{center}
\vspace{-5mm}
\caption{Stable, preferred, semi-stable, stage and stage2 semantics violate INRA, since the only extension $\{a\}$ attacks $b$, but removing $b$ yields an additional extension, namely $\{c\}$.}
\label{fig:counterex1}
\end{figure}
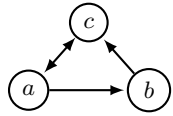

\begin{figure}[!h]
\vspace{-11mm}
\begin{center}
\begin{tikzpicture}[->,>=stealth,shorten >=1pt,auto,node distance=1.5cm,
  thick,main node/.style={circle,draw,font=\small}]
  \node[main node] (a) at (0,0) {$a$};
  \node[main node] (b) at (1.6,0) {$b$};
  \node[main node] (c) at (0.8,0.9) {$c$};
  \draw[->,>=latex] (a) to (b);
  \draw[->,>=latex] (b) to (c);
  \draw[<->,>=latex] (c) to (a);
   \path[->,every loop/.style={looseness=10}] (c) edge  [in=120,out=60,loop] node {} ();  
	\end{tikzpicture} \\ 
\end{center}
\vspace{-5mm}
\caption{CF2 semantics violates INRA, since both extension ($\{a\}$ and $\{b\}$) attack $c$, but after removing $c$, $\{b\}$ is no longer an extension.}
\label{fig:counterexCF2}
\vspace{-3mm}
\end{figure}
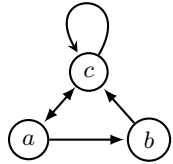
}

\begin{theorem}
 Stable semantics satisfies SCOOC.
\end{theorem}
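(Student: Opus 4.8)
The plan is to observe that stable extensions in fact satisfy the unrestricted \emph{strong completeness} property mentioned in the text, so that the odd-cycle side condition in the definition of ``strongly complete outside odd cycles'' is never actually needed.

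Concretely, I would let $F = \langle \Ar,\att \rangle$ be an arbitrary AF and let $S$ be a stable extension of $F$. To show that $S$ is strongly complete outside odd cycles, I fix an arbitrary argument $a \in \Ar$ and assume the hypotheses of the implication in the definition, in particular that $S \cap \{a\}^- = \emptyset$. Unpacking the notation $\{a\}^- = \{ b \in \Ar \mid b \text{ attacks } a \}$, this assumption says precisely that no argument in $S$ attacks $a$, i.e.\ $S$ does not attack $a$. Since $S$ is a stable extension, $S$ attacks every argument in $\Ar \setminus S$; contrapositively, every argument not attacked by $S$ lies in $S$. Hence $a \in S$, which is exactly the required conclusion. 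As $a$ was arbitrary, $S$ is strongly complete outside odd cycles, and as $S$ was an arbitrary stable extension, stable semantics satisfies SCOOC.

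I do not expect any real obstacle: the statement is a direct unpacking of the definition of stable extension. The only point worth flagging explicitly is that the argument never uses the assumption that no argument in $\{a\} \cup \{a\}^-$ lies in an odd $\att$-cycle; stable semantics satisfies the full strong completeness property, so it a fortiori satisfies the weaker requirement restricted to arguments outside odd cycles.
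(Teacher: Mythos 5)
Your proof is correct and takes essentially the same route as the paper: both argue that a stable extension is strongly complete \emph{tout court} (no argument unattacked by $E$ can lie outside $E$), so the odd-cycle restriction is never needed. No issues.
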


\onlyreport{
\begin{proof}
Consider an AF $F$, a stable extension $E$ of $F$ and an argument $a \in \Ar$, such that $E \cap \{a\}^- = \emptyset$. Then by definition of stable semantics we have $a \in E$. Consequently, $E$ is strongly complete, and in particular $E$ is strongly complete outside odd cycles. 
\qed
\end{proof}
}

\begin{theorem}
\label{thm:violateSCOOC}
Complete, grounded, preferred, semi-stable, naive, stage, CF2, stage2 and $\nsa(\CFtwo)$ semantics violate SCOOC.
\end{theorem}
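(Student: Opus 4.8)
The plan is to exhibit a single argumentation framework (or at most two) that serves as a simultaneous counterexample for all nine semantics, by presenting an AF together with a $\sigma$-extension that fails to be strongly complete outside odd cycles. The natural candidate is an AF consisting of an unattacked odd cycle --- the simplest being a three-cycle $a_1 \to a_2 \to a_3 \to a_1$ --- together with one further argument $b$ that is attacked by exactly one node of the cycle, say $(a_1, b) \in \att$, and that attacks nothing. Here $b$ is not in any odd cycle, and $\{b\}^- = \{a_1\}$ is also not in any odd cycle (the three-cycle through $a_1$ is an odd cycle, so actually $a_1$ \emph{is} in an odd cycle --- so this particular construction needs adjusting). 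The fix is to have $b$ be attacked by an argument that is itself downstream of the odd cycle rather than part of it: take the three-cycle $a_1 \to a_2 \to a_3 \to a_1$, an argument $c$ with $(a_1, c) \in \att$, and an argument $b$ with $(c, b) \in \att$, where $b$ and $c$ attack nothing else. Then $\{b\}^- = \{c\}$, and neither $b$ nor $c$ lies on any odd $\att$-cycle, so the SCOOC precondition ``no argument in $\{b\} \cup \{b\}^-$ is in an odd cycle'' is met. For any extension $E$ in any of the listed semantics with $E \cap \{c\} = \emptyset$ we would need $b \in E$ for strong completeness outside odd cycles.

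The key steps, in order, are: first, fix the AF $F$ described above; second, for each of the nine semantics, compute (or cite the known behavior of) its extensions on $F$ and identify one extension $E$ with $c \notin E$ but $b \notin E$. For the naive-based semantics (naive, stage, CF2, stage2, $\nsa(\CFtwo)$), the relevant extension will be something like $E = \{a_2\}$ (or the appropriate naive-style choice within the three-cycle): it is conflict-free and maximal, it does not attack $c$ (only $a_1$ attacks $c$, and $a_1 \notin E$), hence it does not attack $b$, yet $b \notin E$ because $b$ is not attacked by $E$ but also... wait, here I must be careful: if $E$ does not attack $b$, a \emph{maximal} conflict-free set would include $b$. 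So in fact $b$ \emph{will} be in the naive/CF2-style extensions whenever $c$ is not attacked. This means the counterexample must route the failure through $c$ rather than $b$ directly: the point is that $c$ is attacked by $a_1$ only, and in the extension $E = \{a_2\}$, $a_1$ is not accepted, so $\{a_2\}$ does not attack $c$; hence SCOOC with $a = c$ would require $c \in E$, but $\{a_2, c\}$ may still be conflict-free so naive/CF2 \emph{would} add $c$ --- so I need $c$ to be self-excludable or the SCOOC target to genuinely fail. The clean resolution: make $a$ in the SCOOC definition be one of the cycle's own successors after all is hopeless since cycle nodes are in odd cycles; instead route it so that the attacker $c$ of $b$ is attacked only from inside the odd cycle, and arrange an even cycle or extra structure so that the ``paradoxical'' extension genuinely omits $c$'s only attacker yet also omits $c$. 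The cleanest is exactly the CF2-on-a-six-cycle example alluded to in the text (Figure ``fig:6cycle''): a six-cycle $a_1 \to \cdots \to a_6 \to a_1$, where CF2's SCC-recursion can select $\{a_1, a_4\}$ \emph{and} can also select a paradoxical partition; attach $b$ attacked only by $a_1$.

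So the concrete plan is: use two counterexamples. (1) For complete, grounded, preferred, semi-stable: take the three-cycle $a_1 \to a_2 \to a_3 \to a_1$ plus $b$ with $(a_1, b) \in \att$ and nothing else. The grounded (hence a complete, and via the empty set the unique relevant) extension is $\emptyset$; $\{b\}^- = \{a_1\}$, and $a_1$ is in an odd cycle --- again this trips the precondition. The genuinely safe construction must therefore chain: three-cycle, then $c$ attacked by $a_1$, then $b$ attacked by $c$; the grounded extension is $\emptyset$ (every argument is attacked: $a_1,a_2,a_3$ by cycle, $c$ by $a_1$, $b$ by $c$), so $b \notin E$ while $\{b\}^- = \{c\}$ contains no odd-cycle argument and $E \cap \{c\} = \emptyset$: SCOOC fails. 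Complete, preferred, semi-stable: all have $\emptyset$ (or the only extension) failing the same way since nothing defends $b$ or $c$. (2) For naive, stage, CF2, stage2, $\nsa(\CFtwo)$: in the same AF, naive-style extensions \emph{do} pick up $b$ and $c$ when possible, so I instead use the six-cycle example: six-cycle $a_1 \to a_2 \to \cdots \to a_6 \to a_1$ plus $c$ with $(a_1,c)$, plus $b$ with $(c,b)$. CF2 (and stage2, via the even cycle being split paradoxically --- actually the text says stage2 fixes the six-cycle, so I need the standard stage2-counterexample AF from Figure ``fig:stage2ex'', presumably a larger configuration) admits an extension such as $\{a_2, a_5\}$ that omits $a_1$, hence omits nothing attacking $c$, hence leaves $c$ unattacked; but there is \emph{another} CF2/stage2 extension of the cycle, the paradoxical one picking $\{a_1, a_4\}$-complement style, which... the decisive point is that some naive-based extension contains neither $c$ nor any attacker of $c$ while also not being extendable by $c$ --- this happens precisely when $c$ is itself rendered conflicted, which it is not. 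The honest conclusion: I will import the specific AFs of the paper's own Figures (the six-cycle for CF2, and a slightly larger AF for stage2, stage, naive) and for each, state the offending extension and verify the three SCOOC preconditions plus the violated conclusion $a \notin E$.

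I expect the main obstacle to be finding, for the naive-based semantics (especially stage2, which the authors designed precisely to behave better on even cycles), an AF in which the ``paradox'' provably leaks onto an argument outside every odd cycle --- i.e. constructing the configuration so that some stage/stage2/CF2 extension omits an argument $a$ together with all of $\{a\}^-$ while $\{a\} \cup \{a\}^-$ avoids odd cycles. For CF2 the six-cycle trick is standard; for stage2 one needs a genuinely cleverer AF, and verifying that stage2's SCC-recursion actually produces the bad extension there (rather than being rescued by the stage-stability maximality at each SCC) is the delicate computation. Once the right AFs are pinned down, the rest is routine extension-enumeration, so I would spend the bulk of the effort on the stage2 case and then present all the verifications compactly, grouping semantics that share a counterexample.
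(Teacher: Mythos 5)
Your construction for the admissibility-based semantics is sound and matches the paper's in spirit: an unresolved odd cycle (the paper uses a self-attacker, you use a three-cycle) feeding a two-step chain $c \to b$, so that the unique complete extension $\emptyset$ omits $b$ even though $\{b\}^-=\{c\}$ and neither $b$ nor $c$ lies on an odd cycle. That part is fine. The problems are with the five naive-based semantics. For naive, CF2 and $\nsa(\CFtwo)$ you circle around the right example but never land on it: you keep trying to place the SCOOC violation at an argument attached \emph{downstream} of a cycle, and then correctly observe that any maximal conflict-free set absorbs such an argument. The paper's point is that the violation sits on a node of the bare six-cycle itself: with $a\to b\to c\to d\to e\to f\to a$ and $E=\{a,d\}$, the argument $c$ is omitted not because $E$ attacks it but because $c$ attacks $d\in E$; since the six-cycle contains no odd $\att$-cycle at all, the precondition on $\{c\}\cup\{c\}^-=\{b,c\}$ holds and SCOOC fails. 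No attached argument is needed, and attaching one only reintroduces the absorption problem you identified.

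The more serious gap is stage and stage2, where you explicitly defer to ``the paper's own Figures'' --- which you have not seen --- so the proof is simply not there for those two semantics. Note that the six-cycle does \emph{not} work for them: its only stage (hence stage2) extensions are the two stable sets $\{a,c,e\}$ and $\{b,d,f\}$, which are strongly complete. The paper handles stage with a separate small example ($a\to b\to c$ with $c$ self-attacking; $\{b\}$ is range-maximal yet omits the unattacked $a$), and handles stage2 by modifying the six-cycle --- adding a mutual attack between $a$ and $f$ and self-attacks on $e$ and $f$ --- so that full range becomes unattainable and $\{a,d\}$ becomes range-maximal while still omitting $c$, whose sole attacker $b$ is not in the extension and where $b,c$ still avoid all odd cycles. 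You correctly predicted that stage2 is the delicate case, but predicting the difficulty is not the same as resolving it; as written, the proposal proves the theorem only for complete, grounded, preferred and semi-stable.
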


\onlyreport{
\begin{proof}
A counterexample of complete, grounded, preferred and semi-stable is in Figure~\ref{fig:selfattack}, a counterexample for naive, CF2 and $\nsa(\CFtwo)$ is in Figure~\ref{fig:6cycle}, a counterexample for naive and stage is in Figure~\ref{fig:selfattack2}, and a counterexample for stage2 is in Figure~\ref{fig:stage2ex}.
\qed
\end{proof}

\vspace{-9mm}

\begin{figure}[!h]
\begin{center}
\begin{tikzpicture}[->,>=stealth,shorten >=1pt,auto,node distance=1.5cm,
  thick,main node/.style={circle,draw,font=\small}]
  \node[main node] (1) {$a$};
	\node[main node] (2) [right of=1] {$b$};
	\node[main node] (3) [right of=2] {$c$};
  \draw[->,>=latex] (1) to (2);
  \draw[->,>=latex] (2) to (3);
   \path[->,every loop/.style={looseness=10}] (1) edge  [in=210,out=150,loop] node {} ();  
	\end{tikzpicture} \\ 
\end{center}
\vspace{-7mm}
\caption{Complete, grounded, preferred and semi-stable semantics violate SCOOC, since $E=\{\}$ is an extension but $E$ is not strongly complete outside odd cycles: $b$ and $c$ are not in an odd cycle, $\{c\}^-=\{b\}$, but $E$ does not contain $c$.}
\label{fig:selfattack}
\end{figure}
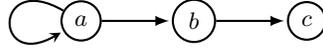

\vspace{-7mm}

\begin{figure}[!h]
\begin{center}
\begin{tikzpicture}[->,>=stealth,shorten >=1pt,auto,node distance=1.5cm,
  thick,main node/.style={circle,draw,font=\small}]
  \node[main node] (1) {$a$};
	\node[main node] (2) [right of=1] {$b$};
	\node[main node] (3) [right of=2] {$c$};
	\node[main node] (4) [below of=3] {$d$};
	\node[main node] (5) [below of=2] {$e$};
	\node[main node] (6) [below of=1] {$f$};
  \draw[->,>=latex] (1) to (2);
  \draw[->,>=latex] (2) to (3);
  \draw[->,>=latex] (3) to (4);
  \draw[->,>=latex] (4) to (5);
  \draw[->,>=latex] (5) to (6);
  \draw[->,>=latex] (6) to (1);
  \end{tikzpicture} \\ 
\end{center}
\vspace{-4mm}
\caption{Naive, CF2 and $\nsa(\CFtwo)$ semantics violate SCOOC, since $E=\{a,d\}$ is an extension but $E$ is not strongly complete outside odd cycles: $b$ and $c$ are not in an odd cycle, $\{c\}^-=\{b\}$, but $E$ does not contain $c$.}
\label{fig:6cycle}
\end{figure}

\vspace{-8mm}

\begin{figure}[!h]
\begin{center}
\begin{tikzpicture}[->,>=stealth,shorten >=1pt,auto,node distance=1.5cm,
  thick,main node/.style={circle,draw,font=\small}]
  \node[main node] (1) {$a$};
	\node[main node] (2) [right of=1] {$b$};
	\node[main node] (3) [right of=2] {$c$};
  \draw[->,>=latex] (1) to (2);
  \draw[->,>=latex] (2) to (3);
   \path[->,every loop/.style={looseness=10}] (3) edge  [in=30,out=330,loop] node {} ();  
	\end{tikzpicture} \\ 
\end{center}
\vspace{-7mm}
\caption{Stage and naive semantics violate SCOOC, since $E=\{b\}$ is an extension but $E$ is not strongly complete outside odd cycles: $a$ is not in an odd cycle, $\{a\}^-=\{\}$, but $E$ does not contain $a$.}
\label{fig:selfattack2}
\vspace{2mm}
\end{figure}
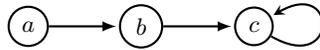

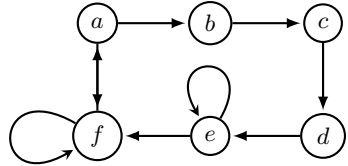
\begin{figure}[!h]
\begin{center}
\begin{tikzpicture}[->,>=stealth,shorten >=1pt,auto,node distance=1.5cm,
  thick,main node/.style={circle,draw,font=\small}]
  \node[main node] (1) {$a$};
	\node[main node] (2) [right of=1] {$b$};
	\node[main node] (3) [right of=2] {$c$};
	\node[main node] (4) [below of=3] {$d$};
	\node[main node] (5) [below of=2] {$e$};
	\node[main node] (6) [below of=1] {$f$};
  \draw[->,>=latex] (1) to (2);
  \draw[->,>=latex] (2) to (3);
  \draw[->,>=latex] (3) to (4);
  \draw[->,>=latex] (4) to (5);
  \draw[->,>=latex] (5) to (6);
  \draw[->,>=latex] (6) to (1);
  \draw[->,>=latex] (1) to (6);
     \path[->,every loop/.style={looseness=10}] (5) edge  [in=120,out=60,loop] node {} ();  
   \path[->,every loop/.style={looseness=10}] (6) edge  [in=210,out=150,loop] node {} ();  

  \end{tikzpicture} \\ 
\end{center}
\vspace{-9mm}
\caption{Stage2 semantics violates SCOOC, since $E=\{a,d\}$ is an extension but $E$ is not strongly complete outside odd cycles: $b$ and $c$ are not in an odd cycle, $\{c\}^-=\{b\}$, but $E$ does not contain $c$.}
\label{fig:stage2ex}
\end{figure}
}

\section{SCF2 Semantics}
\label{sec:SCF2}
In this section, we define and study the new semantics SCF2, which satisfies both of the new principles introduced in the previous section as well as the Directionality principle defined in the preliminaries. Furthermore, we will motivate the design choices in the definition of SCF2 by looking at how semantics defined in a similar way as SCF2 fail to satisfy at least one of Directionality, INRA or SCOOC.

We have seen in the previous section that $\nsa(\CFtwo)$ satisfies INRA but does not satisfy SCOOC. The idea behind the definition of SCF2 is that we modify the definition of $\nsa(\CFtwo)$ by already enforcing SCOOC at the level of the single SCCs considered in the SCC-recursive definition of $\nsa(\CFtwo)$. For this, we define a variant of naive semantics called \emph{SCOOC-naive semantics}.

\begin{definition}
  Let $F = \langle \Ar,\att \rangle$ be an AF, and let $A \subseteq \Ar$. We say that $A$ is an \emph{SCOOC-naive extension} of $F$ if $A$ is subset-maximal among the conflict-free subsets of $\Ar$ that are strongly complete outside odd cycles.
\end{definition}

Recall that CF2 is defined to be $\scc(\naive)$, i.e.\ $\nsa(\CFtwo) = \nsa(\scc(\naive))$. For defining SCF2, we just replace naive semantics by SCOOC-naive semantics in this definition.

\begin{definition}
SCF2 semantics is defined to be $\nsa(\scc(\SCOOCnaive))$.
\end{definition}

In other words, SCF2 works by first deleting all self-attacking arguments and then applying the SCC-recursive scheme that is also used in the definition of CF2, but applying SCOOC-naive semantics instead of naive semantics to each single SCC. \onlyreport{

As we will show below,} SCF2 satisfies Directionality, INRA and SCOOC, which we have argued to be desirable principles when evaluating a semantics designed to correspond well to what humans would consider a rational judgment on the acceptability of arguments. The somewhat complex definition of SCF2 raises the question whether a simpler definition could also be enough to satisfy these three principles. 

To approach this question systematically, we would like to point out that the definition of SCF2 contains three features that distinguishes it from naive semantics: It starts by deleting all self-attacking arguments (the function $\nsa$), it proceeds by applying the SCC-recursive scheme (the function $\scc$), and within each SCC, it applies SCOOC-naive rather than naive semantics. If we consider each of these three features a switch that we can switch on or off, we have eight definitions of semantics, namely $\naive$, $\nsa(\naive)$, $\SCOOCnaive$, $\nsa(\SCOOCnaive)$, $\scc(\naive)$, $\nsa(\scc(\naive))$, $\scc(\SCOOCnaive)$ and $\nsa(\scc(\SCOOCnaive))$. One can easily see that $\naive = \nsa(\naive)$, so these eight definitions define only seven different semantics, whose properties we now study in order to show that only SCF2 satisfies all three principles Directionality, INRA and SCOOC. 


Table 1 shows which of these seven semantics satisfies which of these three principles (we use the standard name CF2 for $\scc(\naive)$ and use the short name SCF2 to refer to $\nsa(\scc(\SCOOCnaive))$). Note that SCF2 satisfies all three principles, while no other of these seven semantics satisfies all three principles.


\begin{table}
\begin{center}
\begin{tabular}{|l||c|c|c|}
\rule{1.54cm}{0pt}&\rule{1.54cm}{0pt}&\rule{1.54cm}{0pt}&\rule{1.54cm}{0pt}\\[-2.8ex]
 \hline
			& \hspace{-0.4mm} Directionality \hspace{-0.4mm} & INRA 	& SCOOC \\
 \hline
 \hline
$\naive = \nsa(\naive)$ & $\times$	& $\checkmark$	& $\times$ \\
 \hline
SCOOC-naive 		& $\times$	& $\times$	& $\checkmark$ \\
 \hline
$\nsa(\SCOOCnaive)$ 	& $\times$	& $\times$	& $\checkmark$ \\
 \hline
CF2			& $\checkmark$	& $\times$	& $\times$ \\
 \hline
$\nsa(\CFtwo)$		& $\checkmark$	& $\checkmark$	& $\times$ \\
 \hline
$\scc(\SCOOCnaive)$	& $\checkmark$	& $\times$	& $\checkmark$ \\
 \hline
SCF2			& $\checkmark$	& $\checkmark$	& $\checkmark$ \\
 \hline
\end{tabular}
\label{table}
\vspace{1mm}
\caption{Properties of SCF2 and six semantics that are related to it with respect the three principles considered in this paper}
\end{center}
\vspace{-7mm}
\end{table}


\onlypaper{The results displayed in Table 1 are proven in a technical report \cite{SCF2technicalreport}. Additionally, we prove there that every AF has an SCF2 extension.}

\onlyreport{For the rest of this section, we will prove the results depicted in Table 1 as well as the theorem that every AF has an SCF2 extension. In order to prove properties about SCF2, we first need another lemma.

\begin{lemma}
\label{lem:SCOOCrich}
 SCOOC-naive semantics is SCC-semi-rich.
\end{lemma}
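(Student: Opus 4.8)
\medskip
\noindent\textbf{Proof proposal.}
The plan is to reduce the lemma to the existence of one suitable set and then to construct that set from the structure of the odd cycles of $F$. \emph{Reduction:} it suffices to produce a conflict-free set $B$ with $a \in B$ that is strongly complete outside odd cycles. Indeed, since $F$ is finite, among the conflict-free, strongly-complete-outside-odd-cycles sets containing $B$ there is a subset-maximal one $E$; any conflict-free strongly-complete-outside-odd-cycles set properly containing $E$ would also contain $B$, so $E$ is subset-maximal among \emph{all} conflict-free sets that are strongly complete outside odd cycles, i.e.\ $E \in \SCOOCnaive(F)$. Since $a \in B \subseteq E$ and $E$ is conflict-free, no argument of $E$ attacks $a$, so $E$ is the required extension. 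I will also use the reformulation that a conflict-free set $A$ is strongly complete outside odd cycles exactly when, for every argument $c$ such that neither $c$ nor any attacker of $c$ lies on an odd $\att$-cycle, $A$ \emph{decides} $c$, i.e.\ $c \in A$ or $A$ attacks $c$.

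\emph{The case where $F$ has no odd cycle.} Since $F$ is strongly connected and has no odd $\att$-cycle, any two $\att$-walks between the same pair of arguments have the same parity (a closed $\att$-walk of odd length contains an odd $\att$-cycle), so $\Ar$ splits as $V_0 \cup V_1$ with every attack going between the two parts. Then $V_0$ and $V_1$ are conflict-free, and by strong connectivity each attacks every argument of the other part, so each is a stable extension and hence a $\SCOOCnaive$-extension. Taking $B$ to be whichever part contains $a$ settles this case.

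\emph{The case where $F$ has an odd cycle; the main obstacle.} Let $Y$ be the set of arguments lying on no odd $\att$-cycle and $Z = \Ar \setminus Y \neq \emptyset$. Then $F|_Y$ has no odd cycle, and every argument relevant to strong completeness outside odd cycles, together with all of its attackers, lies in $Y$. Using the standard fact that a finite AF without odd $\att$-cycles has a stable extension, one checks that any stable extension $B_Y$ of $F|_Y$ is already a conflict-free set of $F$ that decides every argument of $Y$, hence a conflict-free strongly-complete-outside-odd-cycles set of $F$. The remaining task -- and the technical heart of the proof -- is to obtain such a set that moreover contains $a$, and this is where strong connectivity of $F$ is essential: a stable extension of $F|_Y$ need not contain $a$, but if some attacker of $a$ is unattacked within $F|_Y$ then all of its $F$-attackers lie in $Z$, so it is not a relevant argument and may simply be excluded. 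I would make this precise by choosing, simultaneously inside $Z$ and inside the strongly connected components of $F|_Y$ (each bipartite, by the argument of the previous case, and processed in a topological order), a conflict-free selection that contains $a$, avoids every attacker of $a$, and still decides every relevant argument, using at each step that strong connectivity forces every attacker of $a$ that must be kept out of $B$ to be attacked by an already chosen argument. Managing this coordination is the only nontrivial part; everything else is routine.
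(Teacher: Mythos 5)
Your reduction (find one conflict-free set $B$ containing $a$ that is strongly complete outside odd cycles, then pass to a subset-maximal such set above it) is sound, and your treatment of the case where $F$ has no odd $\att$-cycle via the bipartition $V_0 \cup V_1$ is correct. The problem is that the case carrying all the content of the lemma --- an odd cycle is present and $a$ must be forced into the set --- is described rather than proven. Your last paragraph asserts that one can make ``a conflict-free selection that contains $a$, avoids every attacker of $a$, and still decides every relevant argument,'' justified only by the clause that ``strong connectivity forces every attacker of $a$ that must be kept out of $B$ to be attacked by an already chosen argument.'' That clause is exactly the claim that needs a proof, and it is also understated: the obligation is not confined to attackers of $a$. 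Including $a$ excludes all of $a$'s attackers and attackees; every excluded argument $b$ that is itself relevant (i.e., $b$ and all of $b$'s attackers lie outside odd cycles) must in turn be attacked, forcing the inclusion of one of $b$'s attackers; each such inclusion excludes further arguments, which may again be relevant; and so on. Nothing in your sketch rules out that this cascade eventually forces two mutually attacking arguments into $B$.

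The paper's proof spends essentially all of its effort on precisely this point. It builds the set by an explicit iteration ($E_0=\{a\}$, then alternately exclude the neighbours of what was just included and include forced attackers of excluded relevant arguments), records along the way that every argument included at stage $k$ is reachable from or to $a$ by $\att$-paths of controlled parity, and then refutes any conflict inside the constructed set by splicing these recorded even- and odd-length paths with arbitrary connecting paths (which exist because $F$ is a single SCC) to produce an odd cycle through an argument that was only ever included because it attacks a relevant argument, hence lies on no odd cycle --- a contradiction. Without some version of this parity bookkeeping your construction has no reason to be conflict-free, so as written the proposal has a genuine gap at its central step. A secondary worry: processing the SCCs of $F|_Y$ in topological order and picking one side of each bipartition will in general clash with the forced inclusions propagating from $Z$ and from other components, so even the shape of the construction needs to be the forced cascade rather than a per-component choice of sides.
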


\begin{proof}
 Let $F = \langle \Ar,\att \rangle$ be an AF such that $|\SCCs(F) = 1|$ and let $c \in \Ar$ be an argument such that $(c,c) \notin \att$. We prove the lemma by induction over $|\Ar|$, so we assume for the inductive hypothesis that it holds for strict subframeworks of $F$. We need to find an SCF2-extension $E$ of $F$ such that $E$ does not attack $c$. We do this by specifying a non-deterministic procedure to construct such an extension. The constructed extension will be constructed in such a way that it must contain $c$, which implies that it does not attack $c$.
 
 At each step $k$ of the procedure, we identify a set $E_k$ of arguments to be included in $E$ and a set $\bar E_k$ of arguments for which we rule out that they may be in $E$. We define $U_k$ to be set of arguments $a$ such that it is not yet determined by step $k$ whether $a$ is included in $E$ or not; formally, $U_k := \Ar \setminus \bigcup_{i<k} (E_k \cup \bar E_k)$. We set $\bar E_0 := \{ a \in \Ar \mid (a,a) \in \att \}$ and $E_0 := \{c\}$. Let $k > 0$. We set $\bar E_k := \{ a \in \Ar \mid \textnormal{some argument in } E_{k-1} \textnormal{ attacks } a \textnormal{ or is attacked by } a \}$. For the definition of $E_k$ we have a case-distinction:
 \begin{enumerate}
  \item If $E_{k-1} \neq \emptyset$, we define $E_k := E^1_k \cup E^2_k$, where $E^1_k := \{ a \in U_k \mid \textnormal{all}$ attakers of $a$ are in some $\bar E_i$ for $i<k \}$, and $E^2_k := \{a \in U_k \mid \textnormal{for some } b \in \bar E_{k-1}$ that is not in an odd cycle and whose attackers are not in an odd cycle and not in any $E_i$ for $i < k$, $a$ attacks $b \}$. 
  \item If $E_{k-1} = \emptyset$ and $U_k \neq \emptyset$, we set $E_k$ to be a SCOOC-naive extension of an unattacked SCC of $U_k$ (which exists and is non-empty by inductive hypothesis and by the fact that $U_k$ does not contain self-attacking arguments, as these are all in $\bar E_0$). 
  \item If $E_{k-1} = \emptyset$ and $U_k = \emptyset$, the procedure stops.
 \end{enumerate}
 When the procedure stops at step $k$, we set $E$ to be the union of all $E_i$ for $i<k$. Note that once case 2 is applied, only case 2 can be applied. Let $n$ denote the step at which case 2 is first applied. Note that a simple proof by induction establishes that 
 whenever $a \in E^2_k$ for $k < n$, then there is an even-length path from $a$ to $c$ (1). This can be generalized to the statement that whenever $a \in E_k$ for $k < n$, there are $a',k'$ such that $a' \in \bar E_{k'}$ and there is an odd-length $\att$-path from $a'$ to $c$ and from $a'$ to $a$ (2).

 $a \in E$ because $a \in E_0$. It is easy to see from the construction of $E$ that $E$ is strongly complete outside odd cycles and that adding an argument to $E$ creates a conflict within $E$. So all we still need to prove is that $E$ is conflict-free. Suppose for a contradiction that $(a_1,a_2) \in \att$ for $a_1,a_2 \in E$. From the definition of $\bar E_k$ for $k>0$ it follows that there is an $i$ such that $a_1,a_2 \in E_i$. Clearly this is not the case if $E_i$ was defined according to case 2 of the definition of $E_k$, so $k<n$. By the definition of $E^1_k$, it furthermore follows that $a_2 \notin E^1_i$. So $a_2 \in E^2_i$, i.e.\ by property (1) there is an even-length $\att$-path $p_0$ from $a_2$ to $c$. Furthermore, by property (2) there is an $a_1'$ such that there is an odd-length $\att$-path $p_1$ from $a_1'$ to $c$ and from $a_1'$ to $a_1$. Since $|\SCCs(F) = 1|$, there is a path $p_2$ from $a_2$ to $a_1'$. The length of path $p_2$ must be even, because if it were odd, then concatenating $p_2$ with the odd-length path $p_1$ and the attack from $a_1$ to $a_2$ would result in an odd cycle through $a_2$, which cannot exist as $a_2 \in E^2_i$. Since $|\SCCs(F) = 1|$, there is a path $p_3$ from $c$ to $a_2$. If the length of $p_3$ is odd, then $p_3$ concatenated with the even-length path $p_0$ is an odd cycle through $a_2$. If the length of $p_3$ is even, the $p_3$ concatenated with the even-length path $p_2$ and the attack from $a_1'$ to $c$ is an odd cycle through $a_2$. so in either case, there is an odd cycle through $a_2$, which contradicts the fact that $a_2 \in E^2_i$. This completes our proof by contradiction that $E$ is conflict-free.
 \qed
\end{proof}

\begin{theorem}
\label{thm:exSCF2}
 Every AF has at least one SCF2 extension.
\end{theorem}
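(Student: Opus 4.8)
The plan is to reduce the statement, via the outermost $\nsa$, to the existence of $\scc(\SCOOCnaive)$-extensions on AFs \emph{without} self-attacking arguments, and then to run the classical SCC-recursive construction, feeding it at the base by Lemma~\ref{lem:SCOOCrich}. Concretely: since $\textrm{SCF2} = \nsa(\scc(\SCOOCnaive))$, the SCF2-extensions of an AF $F$ are by definition exactly the $\scc(\SCOOCnaive)$-extensions of $\NSA(F)$; and $\NSA(F)$ contains no self-attacking argument. Hence it suffices to show that every AF $G$ with no self-attacking argument has at least one $\scc(\SCOOCnaive)$-extension, and then apply this to $G = \NSA(F)$.

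I would prove this by strong induction on $|\Ar(G)|$. If $\Ar(G) = \emptyset$, then $\emptyset$ is trivially a $\scc(\SCOOCnaive)$-extension. If $|\SCCs(G)| = 1$, pick any $c \in \Ar(G)$; since $G$ has no self-attacking argument, $(c,c) \notin \att$, so Lemma~\ref{lem:SCOOCrich} (SCC-semi-richness of SCOOC-naive semantics) yields an SCOOC-naive extension of $G$, which by the first clause of Definition~\ref{def:SCC} is a $\scc(\SCOOCnaive)$-extension of $G$. (It is convenient here that the preliminary $\nsa$ step has removed all self-attacks, so the degenerate ``every argument self-attacks'' situation never arises in the recursion.)

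For the inductive step, assume $|\SCCs(G)| > 1$. Fix a topological ordering $C_1,\dots,C_m$ of $\SCCs(G)$ (possible since the SCC-quotient graph is acyclic), so that every attack entering an SCC from outside it originates in an earlier SCC. Build $S$ greedily: having defined $S_{<i} := \bigcup_{j<i} S_j$, set $C_i' := C_i \setminus D_G(S_{<i})$ and, using the induction hypothesis (applicable since $|C_i'| \le |C_i| < |\Ar(G)|$ as there are at least two SCCs, and $G|_{C_i'}$ inherits having no self-attacks), choose a $\scc(\SCOOCnaive)$-extension $S_i$ of $G|_{C_i'}$; finally put $S := \bigcup_i S_i$. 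Two bookkeeping observations then close the argument: (i) since $S_j \subseteq C_j$ for every $j$, we get $S \cap C_i = S_i$; and (ii) since $D_G(S) \cap C_i$ depends only on arguments of $S$ lying in SCCs with an edge into $C_i$ — all of which precede $C_i$ — we have $C_i \setminus D_G(S) = C_i \setminus D_G(S_{<i}) = C_i'$. Therefore, for each $C_i \in \SCCs(G)$, $S \cap C_i$ is a $\scc(\SCOOCnaive)$-extension of $G|_{C_i \setminus D_G(S)}$, so by the second clause of Definition~\ref{def:SCC}, $S$ is a $\scc(\SCOOCnaive)$-extension of $G$, completing the induction.

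The main obstacle is not any single deep step but getting the bookkeeping in the inductive step exactly right: one must verify that processing SCCs in topological order makes $D_G(S) \cap C_i$ stabilize \emph{before} $C_i$ is handled, so that the locally chosen $S_i$ remains a valid witness once $S$ is fully assembled. The only genuinely new ingredient beyond the standard ``if the base semantics is total, so is its SCC-recursive version'' reasoning is Lemma~\ref{lem:SCOOCrich}, which supplies the base case; and the reduction through $\nsa$ is precisely what lets us assume no self-attacks throughout, keeping that base case clean.
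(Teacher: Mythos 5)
Your proof is correct and follows essentially the same route as the paper: the paper's own proof is a two-sentence appeal to Lemma~\ref{lem:SCOOCrich} for the single-SCC base case plus "the definition of the SCC-recursive scheme," and your argument simply makes explicit the $\nsa$ reduction and the topological-order induction that the paper leaves implicit.
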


\begin{proof}
 Lemma \ref{lem:SCOOCrich} implies that every single-SCC AF has a SCOOC-naive extension. This together with the definition of the SCC recursive scheme implies that every AF has at least one $\scc(\SCOOCnaive)$-extension, and hence at least one SCF2 extension. \qed
\end{proof}

We now prove the properties of SCF2 listed in Table 1.

%
%

\begin{theorem}
\label{thm:direct}
 SCF2 satisfies Directionality.
\end{theorem}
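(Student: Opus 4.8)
The plan is to exploit the decomposition $\SCFtwo = \nsa(\scc(\SCOOCnaive))$ and to establish Directionality in two independent steps: first that the operator $\scc(\SCOOCnaive)$ satisfies Directionality, and second that the operator $\nsa$ \emph{preserves} Directionality, i.e.\ that $\nsa(\sigma)$ satisfies Directionality whenever $\sigma$ does. Composing the two steps then yields the theorem at once.

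For the $\nsa$ step, fix an AF $F = \langle \Ar,\att \rangle$, put $\Ar' := \{a \in \Ar \mid (a,a) \notin \att\}$ so that $\NSA(F) = F|_{\Ar'}$, let $U$ be unattacked in $F$, and set $U' := U \cap \Ar'$. Two observations do all the work. First, $\NSA(F|_U) = \NSA(F)|_{U'}$, because being self-attacking is a purely local property; concretely, both AFs are equal to $F|_{U'}$. Second, $U'$ is unattacked in $\NSA(F)$, since any attacker in $\NSA(F)$ of an argument of $U' \subseteq U$ is already an attacker of it in $F$, hence lies in $U$ and therefore in $U'$. Applying Directionality of $\sigma$ to $\NSA(F)$ with the unattacked set $U'$ gives $\nsa(\sigma)(F|_U) = \sigma(\NSA(F)|_{U'}) = \{E \cap U' \mid E \in \sigma(\NSA(F))\} = \{E \cap U' \mid E \in \nsa(\sigma)(F)\}$. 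Finally, every $E \in \nsa(\sigma)(F)$ is a subset of $\Ar'$, so $E \cap U = E \cap U'$, and the last set equals $\{E \cap U \mid E \in \nsa(\sigma)(F)\}$, as required.

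For the $\scc$ step I would prove, by induction on $|\Ar|$, the slightly more general statement: if $\sigma(\langle \emptyset,\emptyset\rangle) = \{\emptyset\}$ and every single-SCC AF has at least one $\sigma$-extension, then $\scc(\sigma)$ satisfies Directionality. Both hypotheses hold for $\SCOOCnaive$: the empty AF has $\{\emptyset\}$ as its only SCOOC-naive extension, and single-SCC AFs have SCOOC-naive extensions by Lemma~\ref{lem:SCOOCrich} (noting separately that $\emptyset$ is a SCOOC-naive extension of any AF all of whose arguments are self-attacking). Now let $F = \langle \Ar,\att\rangle$ and let $U$ be unattacked. In the base case $|\SCCs(F)| = 1$ we have $\scc(\sigma)(F) = \sigma(F)$; since $F$ is strongly connected its only unattacked sets are $\emptyset$ and $\Ar$, both of which are handled directly (for $U = \emptyset$ one uses $\sigma(\langle\emptyset,\emptyset\rangle) = \{\emptyset\}$ together with $\sigma(F) \neq \emptyset$). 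If $|\SCCs(F)| > 1$ and $U = \Ar$ the claim is trivial, so assume $U \subsetneq \Ar$. Since $U$ is unattacked it is a union of SCCs of $F$ closed under taking attacking SCCs, whence $\SCCs(F|_U) = \{C \in \SCCs(F) \mid C \subseteq U\}$ and, for $a,b \in U$, $a \sim_F b$ iff $a \sim_{F|_U} b$. A short computation using unattackedness then shows that for every SCC $C \subseteq U$ and every $S$, $C \cap D_F(S) = C \cap D_{F|_U}(S \cap U)$, so $F|_{C \setminus D_F(S)} = (F|_U)|_{C \setminus D_{F|_U}(S \cap U)}$. Unwinding Definition~\ref{def:SCC} on $F$ and on $F|_U$ with this identity and the induction hypothesis gives $\{E \cap U \mid E \in \scc(\sigma)(F)\} \subseteq \scc(\sigma)(F|_U)$; for the reverse inclusion one takes $T \in \scc(\sigma)(F|_U)$, processes the SCCs of $F$ in topological order, keeps $T \cap C$ on the SCCs contained in $U$, and on the remaining SCCs chooses $\scc(\sigma)$-extensions of the relevant restricted frameworks (which exist by the hypothesis on $\sigma$ together with Definition~\ref{def:SCC}), obtaining some $S \in \scc(\sigma)(F)$ with $S \cap U = T$.

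The main obstacle is the bookkeeping in the $\scc$ step: one must verify carefully that restricting $F$ to the unattacked set $U$ neither merges nor splits SCCs and leaves the ``killing'' sets $D_F(\cdot)$ unchanged on the SCCs surviving inside $U$, and in the reverse inclusion one must reconstruct a global extension of $F$ from the given extension of $F|_U$, which needs the existence of $\scc(\sigma)$-extensions plus a topological-order argument so that the recursive conditions of Definition~\ref{def:SCC} can be met simultaneously. This is essentially the classical argument that SCC-recursive semantics satisfy Directionality, adapted to the simplified scheme of Definition~\ref{def:SCC}; the $\nsa$ step and the base case are routine by comparison.
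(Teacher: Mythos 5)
Your proof is correct, and its core is the same as the paper's: the decisive facts in both arguments are that an unattacked set is a union of SCCs (so $\SCCs$ of the framework split into those inside $U$ and those outside), that the sets $D_F(\cdot)$ restrict unchanged to the SCCs inside $U$, and that the existence of extensions (the paper's Theorem~\ref{thm:exSCF2}, your hypothesis on single-SCC frameworks) lets one extend an extension of $F|_U$ to one of $F$. The difference is organizational: the paper works directly with $F' = \NSA(F)$, notes that $U$ is unattacked there, and proves the two inclusions for SCF2 specifically in one pass, unfolding Definition~\ref{def:SCC} a single level; you instead factor the argument into a general lemma that $\nsa$ preserves Directionality and a general lemma that $\scc(\sigma)$ satisfies Directionality under mild conditions on $\sigma$, proved by induction on $|\Ar|$. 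Your modular route buys three things the paper's does not make explicit: it handles the subtlety that one must pass from $U$ to $U \cap \Ar'$ when restricting to $\NSA(F)$ (the paper silently treats $U$ as unattacked in $F'$ even though $U$ need not be a subset of $\Ar'$); it covers the base and edge cases ($|\SCCs| \le 1$, the empty framework) that the paper's one-level unfolding glosses over; and the $\scc$ lemma immediately yields the paper's later claims that CF2, $\nsa(\CFtwo)$ and $\scc(\SCOOCnaive)$ satisfy Directionality, rather than re-deriving them by analogy. The cost is length and the bookkeeping you flag in the reverse inclusion, which the paper avoids by invoking Theorem~\ref{thm:exSCF2} on the whole of $F'|_{\Ar'\setminus U}\setminus D_{F'}(E_1)$ at once instead of SCC by SCC in topological order.
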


\begin{proof}

Consider an AF $F$ and an unattacked set $U$. Define $F' = (\Ar',\att')$ to be $\NSA(F)$. Note that $U$ is unattacked in $F'$ and that 
therefore $\SCCs(F') = \SCCs(F'|_U) \cup \SCCs(F'|_{\Ar' \setminus U})$ (1).

First, we show that if $E_1 \in \SCFtwo({F|}_{U})$ then there exists $E_2 \in \SCFtwo(F)$ such that $E_1=E_2\cap U$.
Assume $E_1 \in \SCFtwo({F|}_{U})$. Then $E_1 \in \scc(\SCOOCnaive)(F'|_U)$, i.e.\ for each $C \in \SCCs(F'|_{U})$, $E_1 \cap C$ is an $\scc(\SCOOCnaive)$-extension of $F'|_{C} \setminus D_{F'}(E_1)$ (2). By Theorem \ref{thm:exSCF2}, $F'_{\Ar\setminus U} \setminus D_{F'}(E_1)$ has an SCF2 extension, say $E$. Define $E_2 := E_1 \cup E$. Note that $E_1=E_2\cap U$, so it is enough to show that $E_2$ is an SCF2-extension of $F$. Since $F' = \NSA(F')$, $E$ is an $\scc(\SCOOCnaive)$-extension of $F'_{\Ar\setminus U} \setminus D_{F'}(E_1)$, i.e.\ for each $C \in \SCCs(F'|_{\Ar \setminus U})$, $E \cap C$ is an $\scc(\SCOOCnaive)$-extension of $F'|_{C} \setminus D_{F'}(E)$. This together with (1) and (2) implies that $E_2$ is an $\scc(\SCOOCnaive)$-extension of $F'$, i.e.\ that $E_2$ is an SCF2-extension of $F$, as required.

Now, we show that if $E \in \SCFtwo(F)$ then $E\cap U \in \SCFtwo({F|}_{U})$. Assume $E \in \SCFtwo(F)$.
Then for each $C \in \SCCs(F')$, $E \cap C$ is an $\scc(\SCOOCnaive)$-extension of $F'|_C \setminus D_F(A)$.
By (1), it follows that $E\cap U$ is an $\scc(\SCOOCnaive)$-extension of $F'$, i.e.\ that $E \cap U$ is an SCF2-extension of $F$, as required. \qed
\end{proof}

\begin{theorem}
\label{thm:scooc}
 SCF2 satisfies SCOOC.
\end{theorem}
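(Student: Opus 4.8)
The plan is to reduce the claim to showing that $\scc(\SCOOCnaive)$ satisfies SCOOC, and then to prove that by induction on the number of arguments. For the reduction, let $F = \langle \Ar, \att \rangle$, let $E \in \SCFtwo(F)$ -- so $E \in \scc(\SCOOCnaive)(\NSA(F))$ -- and let $a \in \Ar$ be such that no argument in $\{a\} \cup \{a\}^-$ lies on an odd $\att$-cycle of $F$ and $E \cap \{a\}^- = \emptyset$. First I would note that $a$ and all its attackers survive in $\NSA(F)$, since a self-loop is itself an odd $\att$-cycle; moreover, because $\NSA(F)$ arises from $F$ only by deleting vertices together with their incident edges, the set $\{a\}^-$ is the same whether computed in $F$ or in $\NSA(F)$, and every odd $\att$-cycle of $\NSA(F)$ is an odd $\att$-cycle of $F$. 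Hence both hypotheses on $a$ hold verbatim in $\NSA(F)$, so it suffices to know that $\scc(\SCOOCnaive)$ satisfies SCOOC.

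For that statement I would fix an AF $G$ and an $\scc(\SCOOCnaive)$-extension $E$ of $G$, take $a \in \Ar(G)$ with no argument of $\{a\} \cup \{a\}^-$ on an odd $\att$-cycle of $G$ and $E \cap \{a\}^- = \emptyset$, and argue $a \in E$ by induction on $|\Ar(G)|$. If $|\SCCs(G)| = 1$ then $E$ is a SCOOC-naive extension of $G$, hence \emph{strongly complete outside odd cycles} by the very definition of SCOOC-naive semantics, so $a \in E$. If $|\SCCs(G)| > 1$, let $C_a$ be the SCC of $G$ containing $a$. Since no argument of $E$ attacks $a$ at all, in particular none from another SCC does, so $a \notin D_G(E)$; thus $a$ is still an argument of $G_a := G|_{C_a \setminus D_G(E)}$, and by the recursive definition $E \cap C_a$ is an $\scc(\SCOOCnaive)$-extension of $G_a$. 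I would then check that the hypotheses descend to $G_a$: the attackers of $a$ inside $G_a$ form a subset of $\{a\}^-$ (a restriction only deletes edges), so $E \cap C_a$ avoids them, and every odd $\att$-cycle of $G_a$ is one of $G$, so no argument among $a$ and its attackers in $G_a$ lies on an odd $\att$-cycle of $G_a$. Since $|\Ar(G_a)| \le |C_a| < |\Ar(G)|$, the induction hypothesis gives $a \in E \cap C_a \subseteq E$.

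The main obstacle is bookkeeping rather than conceptual: passing from $G$ to $G|_{C_a \setminus D_G(E)}$ may split $C_a$ into several SCCs, so one cannot induct on the number of SCCs -- inducting on the number of arguments (as in Lemma \ref{lem:IUA-criterion}) sidesteps this. The two facts that genuinely need checking, both short, are that the recursive scheme does not delete $a$ -- which is exactly what $E \cap \{a\}^- = \emptyset$ guarantees, via $a \notin D_G(E)$ -- and that both conditions defining \emph{strongly complete outside odd cycles} are preserved when passing to a sub-AF, which holds because a restriction never adds edges and therefore creates neither new cycles nor new attackers.
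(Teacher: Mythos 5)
Your proof is correct and follows essentially the same route as the paper's (much terser) argument: the paper simply asserts that ``the moment the SCOOC-naive function is applied to a sub-framework of $F$ containing $a$, we have $a \in E$''. Your version supplies the details the paper leaves implicit --- the reduction through $\NSA$ (using that a self-loop is an odd cycle of length $1$), the observation that $E \cap \{a\}^- = \emptyset$ forces $a \notin D_G(E)$ so the recursive scheme never deletes $a$, and the preservation of both SCOOC hypotheses under restriction --- via a clean induction on the number of arguments.
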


\begin{proof}

Consider an AF $F$, an SCF2 extension $E$ of $F$ and an argument $a \in \Ar$ such that no argument in $\{a\} \cup a^-$ is in an odd cycle and $E \cap a^- = \emptyset$. Then by definition of SCF2 semantics, the moment the SCOOC-naive function is applied to a sub-framework of $F$ containing $a$, we have $a \in E$. Consequently, $E$ is strongly complete outside odd cycles. \qed 
\end{proof}

\begin{theorem}
 SCF2 satisfies INRA.
\end{theorem}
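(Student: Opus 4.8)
The plan is to follow the same strategy that was used for $\nsa(\CFtwo)$: invoke Lemma~\ref{lem:IUA-criterion}(b). Recall that SCF2 is defined as $\nsa(\scc(\SCOOCnaive))$, so by part~(b) of that lemma it suffices to show that SCOOC-naive semantics is SCC-semi-rich. But this is exactly Lemma~\ref{lem:SCOOCrich}, which has already been established. Hence the proof is essentially a one-line appeal to these two previously proven results, mirroring the way the INRA claim for $\nsa(\CFtwo)$ was derived from Lemmas~\ref{lem:naive-rich} and~\ref{lem:IUA-criterion}.

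\begin{proof}
By definition, $\SCFtwo = \nsa(\scc(\SCOOCnaive))$. By Lemma~\ref{lem:SCOOCrich}, SCOOC-naive semantics is SCC-semi-rich. Hence Lemma~\ref{lem:IUA-criterion}(b) applies with $\sigma = \SCOOCnaive$, yielding that $\nsa(\scc(\SCOOCnaive)) = \SCFtwo$ satisfies INRA. \qed
\end{proof}

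The only subtlety worth double-checking is that the hypotheses of Lemma~\ref{lem:IUA-criterion}(b) are met verbatim: the lemma requires $\sigma$ to be SCC-semi-rich and concludes that $\nsa(\scc(\sigma))$ satisfies INRA, and SCOOC-naive is precisely such a $\sigma$. There is no hidden obstacle here — all the real work was done earlier, first in proving that the SCC-recursive construction preserves the relevant richness-to-INRA implication (Lemma~\ref{lem:IUA-criterion}), and then in verifying that SCOOC-naive is SCC-semi-rich (Lemma~\ref{lem:SCOOCrich}, whose proof via the explicit non-deterministic extension-building procedure was the genuinely difficult argument). So I do not anticipate any real difficulty; the main thing to be careful about is simply citing the correct part of Lemma~\ref{lem:IUA-criterion} and the fact that SCF2 unfolds as $\nsa(\scc(\SCOOCnaive))$.
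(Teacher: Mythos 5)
Your proof is correct and matches the paper's own argument exactly: the paper also derives the result by combining Lemma~\ref{lem:SCOOCrich} (SCOOC-naive is SCC-semi-rich) with Lemma~\ref{lem:IUA-criterion}(b) and the definition $\SCFtwo = \nsa(\scc(\SCOOCnaive))$. No gaps.
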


\begin{proof}
 By Lemma \ref{lem:SCOOCrich}, SCOOC-naive semantics is SCC-semi-rich. So by Lemma \ref{lem:IUA-criterion} and the definition of SCF2 it follows that SCF2 satisfies INRA.  \qed
\end{proof}

Next we establish the remaining positive results from Table 1.

%
%
%
%

\begin{theorem}
 CF2, $\nsa(\CFtwo)$ and $\scc(\SCOOCnaive)$ satisfy Directionality.
\end{theorem}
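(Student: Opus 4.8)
The plan is to prove Directionality for all three semantics $\CFtwo$, $\nsa(\CFtwo)$ and $\scc(\SCOOCnaive)$ by a single uniform argument, since all three are of the form $\nsa(\scc(\sigma))$ for a suitable base semantics $\sigma$ on single-SCC frameworks (for $\CFtwo$ we have $\nsa(\scc(\naive))$ in the irrelevant sense that $\scc(\naive)$ already satisfies Directionality, but the cleanest route is to note $\CFtwo = \scc(\naive)$ directly, and $\nsa(\CFtwo) = \nsa(\scc(\naive))$). So I would first isolate a lemma: \emph{if $\sigma$ is any semantics, then $\scc(\sigma)$ satisfies Directionality}, and separately \emph{if $\tau$ satisfies Directionality then so does $\nsa(\tau)$}. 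Combining these two facts with $\CFtwo = \scc(\naive)$, $\nsa(\CFtwo) = \nsa(\scc(\naive))$ and $\scc(\SCOOCnaive)$ being an instance of $\scc(\cdot)$ immediately yields the theorem.

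For the first lemma, I would mimic the structure already used in the proof of Theorem~\ref{thm:direct} but with $F$ in place of $F' = \NSA(F)$: given an AF $F$ and an unattacked set $U$, the key structural observation is that $\SCCs(F) = \SCCs(F|_U) \cup \SCCs(F|_{\Ar \setminus U})$, because no attack crosses from $\Ar \setminus U$ into $U$, so no SCC can straddle the boundary. Then for the inclusion $\{E \cap U \mid E \in \scc(\sigma)(F)\} \subseteq \scc(\sigma)(F|_U)$: given $E \in \scc(\sigma)(F)$, Definition~\ref{def:SCC} says that for each $C \in \SCCs(F)$, $E \cap C$ is an $\scc(\sigma)$-extension of $F|_{C \setminus D_F(E)}$; restricting attention to those $C \subseteq U$ and using that $D_F(E) \cap U = D_{F|_U}(E \cap U)$ (again because no attack enters $U$ from outside, and $\sim$ inside $F|_U$ agrees with $\sim$ inside $F$ on arguments of $U$), we get exactly the defining condition for $E \cap U \in \scc(\sigma)(F|_U)$. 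For the reverse inclusion, given $E_1 \in \scc(\sigma)(F|_U)$, I would extend it: by the recursive scheme applied to the SCCs in $\Ar \setminus U$ (processing them in a topological-like order, removing $D$ of what has been chosen so far), one can always pick choices in $\Ar \setminus U$ that, together with $E_1$, form an $\scc(\sigma)$-extension of $F$ — this requires that $\sigma$ produces at least one extension on each single-SCC framework encountered, which holds for $\naive$ and, by Lemma~\ref{lem:SCOOCrich} (via Theorem~\ref{thm:exSCF2}), for $\SCOOCnaive$. I would state the lemma with a mild non-emptiness hypothesis on $\sigma$, or simply invoke it for the two concrete base semantics.

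For the second lemma, that $\nsa(\tau)$ inherits Directionality from $\tau$: given $F$ and an unattacked set $U$, note that $\NSA(F|_U) = \NSA(F)|_U$ and that $U$ remains unattacked in $\NSA(F)$ (deleting arguments cannot create new attacks into $U$). Then $\nsa(\tau)(F|_U) = \tau(\NSA(F|_U)) = \tau(\NSA(F)|_U) = \{E \cap U \mid E \in \tau(\NSA(F))\} = \{E \cap U \mid E \in \nsa(\tau)(F)\}$, where the middle equality is Directionality of $\tau$ applied to $\NSA(F)$ with the unattacked set $U$. This is essentially the bookkeeping already present in Theorem~\ref{thm:direct}, just abstracted.

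The main obstacle is the reverse inclusion in the first lemma — showing that \emph{every} extension of the unattacked part $F|_U$ can be completed to an extension of all of $F$. This is where the recursive scheme's well-definedness and the existence of base-semantics extensions really gets used, and where one must be careful that removing $D$ of previously-chosen arguments when descending into later SCCs never kills off the possibility of choosing an extension (it does not, precisely because single-SCC restrictions of $\naive$ and $\SCOOCnaive$ always have extensions, and the recursion on strictly smaller frameworks can be handled by induction). The split $\SCCs(F) = \SCCs(F|_U) \sqcup \SCCs(F|_{\Ar \setminus U})$ and the compatibility of $D_F$ with restriction to $U$ are the two facts that make everything go through; both are routine once the "no attack enters $U$" property is spelled out, so I would state them as a small preliminary observation and then run the induction.
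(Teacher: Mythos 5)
Your proposal is correct, and its mathematical core is the same machinery the paper uses, but you package it quite differently. The paper disposes of this theorem in two lines: it cites Baroni and Giacomin for $\CFtwo$, asserts that this ``directly implies'' the result for $\nsa(\CFtwo)$, and refers the reader to the proof of Theorem~\ref{thm:direct} for $\scc(\SCOOCnaive)$. You instead extract two reusable lemmas --- that $\scc(\sigma)$ satisfies Directionality for any (universally defined) base semantics $\sigma$, and that $\nsa(\cdot)$ preserves Directionality --- and derive all three cases uniformly. The first lemma is essentially the proof of Theorem~\ref{thm:direct} with the $\NSA$ preprocessing stripped out and the base semantics abstracted; your key facts (the split $\SCCs(F) = \SCCs(F|_U) \cup \SCCs(F|_{\Ar\setminus U})$, the compatibility of $D_F$ with restriction to $U$, and the use of existence of base-semantics extensions via Lemma~\ref{lem:SCOOCrich} and Theorem~\ref{thm:exSCF2} to complete $E_1$ to an extension of $F$) are exactly the ingredients of that proof. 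Your second lemma makes explicit the ``directly implies'' step that the paper leaves implicit (modulo the pedantic point that the unattacked set in $\NSA(F)$ is $U \cap \Ar'$ rather than $U$, which changes nothing since extensions of $\NSA(F)$ avoid self-attackers anyway). What your route buys is uniformity and a statement that covers any future $\scc(\sigma)$ semantics; what it costs is that you must carry the non-emptiness hypothesis explicitly, and the combination step in the reverse inclusion (showing $E_1 \cup E'$ really is an $\scc(\sigma)$-extension of $F$ when the removal of $D_F(E_1)$ fragments SCCs of $\Ar\setminus U$) is asserted rather than fully argued --- though the paper's own proof of Theorem~\ref{thm:direct} glosses over exactly the same point, so you are not below its standard of rigor.
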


\begin{proof}
 The result for CF2 has been shown by Baroni and Giacomin \cite{baroni2007principle} and directly implies the result for $\nsa(\CFtwo)$. The proof for $\scc(\SCOOCnaive)$ works just like the proof that SCF2 satisfies Directionality (Theorem \ref{thm:direct}).
 \qed
\end{proof}

%

\begin{theorem}
 SCOOC-naive, $\nsa(\SCOOCnaive)$ and $\scc(\SCOOCnaive)$ satisfy SCOOC.
\end{theorem}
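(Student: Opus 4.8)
The plan is to treat the three semantics separately, in order of increasing difficulty, using only the definitions plus one induction. For \SCOOCnaive{} the claim is immediate: by definition every \SCOOCnaive-extension of $F$ is, in particular, a conflict-free subset of $\Ar$ that is strongly complete outside odd cycles, so the SCOOC principle holds trivially.

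For $\nsa(\SCOOCnaive)$, let $E$ be an $\nsa(\SCOOCnaive)$-extension of $F$, i.e.\ a \SCOOCnaive-extension of $F' := \NSA(F)$, and let $a \in \Ar$ be an argument such that no argument in $\{a\} \cup \{a\}^-$ lies on an odd $\att$-cycle of $F$ and $E \cap \{a\}^- = \emptyset$; I must show $a \in E$. First I would observe that a self-attack is an odd $\att$-cycle of length $1$, so neither $a$ nor any attacker of $a$ is self-attacking; hence $a$ is an argument of $F'$ and the set of attackers of $a$ in $F'$ coincides with $\{a\}^-$. Since $F'$ is a sub-graph of $F$, every odd cycle of $F'$ is also an odd cycle of $F$, so no argument in $\{a\} \cup \{a\}^-$ lies on an odd cycle of $F'$ either, and the condition $E \cap \{a\}^- = \emptyset$ transfers verbatim to $F'$. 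As $E$ is strongly complete outside odd cycles in $F'$, we conclude $a \in E$; since $a$ was arbitrary, $E$ is strongly complete outside odd cycles in $F$.

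For $\scc(\SCOOCnaive)$ I would argue by induction on $|\Ar|$, the case $\Ar = \emptyset$ being vacuous. Let $E$ be an $\scc(\SCOOCnaive)$-extension of $F$ and let $a \in \Ar$ satisfy the two SCOOC hypotheses. If $|\SCCs(F)| = 1$, then $E$ is a \SCOOCnaive-extension of $F$ and we finish as in the first case. Otherwise, let $C_a$ be the SCC of $F$ containing $a$. I first note $a \notin D_F(E)$: if some $b \in E$ attacked $a$ with $b \not\sim a$, then $b \in E \cap \{a\}^-$, contradicting $E \cap \{a\}^- = \emptyset$. Hence $a$ is an argument of $F|_{C_a \setminus D_F(E)}$, and by Definition~\ref{def:SCC}, $E \cap C_a$ is an $\scc(\SCOOCnaive)$-extension of this framework, which is strictly smaller than $F$ since $|\SCCs(F)| > 1$ forces $C_a \subsetneq \Ar$. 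The attackers of $a$ inside $F|_{C_a \setminus D_F(E)}$ form a subset of $\{a\}^-$, and passing to a sub-graph creates no new cycles, so none of $a$ or these attackers lies on an odd cycle of $F|_{C_a \setminus D_F(E)}$; likewise $(E \cap C_a)$ avoids the attackers of $a$ in the restricted framework, since these lie in $\{a\}^-$ and $E \cap \{a\}^- = \emptyset$. By the induction hypothesis $E \cap C_a$ is strongly complete outside odd cycles in $F|_{C_a \setminus D_F(E)}$, hence $a \in E \cap C_a \subseteq E$.

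The only delicate point is this last step: verifying that both SCOOC hypotheses on $a$ survive the passage to the restricted framework $F|_{C_a \setminus D_F(E)}$. This is exactly where the monotonicity of ``lying on an odd cycle'' under taking sub-graphs, together with the inclusion of attacker sets, does the work; everything else is a routine unfolding of the definitions and of the SCC-recursive scheme. The $\nsa(\SCOOCnaive)$ case is a special instance of the same phenomenon, exploiting that deleting self-attacking arguments deletes precisely certain arguments lying on odd cycles.
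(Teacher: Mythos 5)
Your proof is correct and follows essentially the same route as the paper: the SCOOC-naive case is immediate from the definition, the $\nsa$ case transfers the hypotheses to $\NSA(F)$ (using that self-attacks are odd cycles of length one), and the $\scc$ case pushes $a$ down the SCC-recursion until SCOOC-naive forces $a \in E$. Your version is in fact more rigorous than the paper's one-line appeal to the SCF2 argument, since you make the induction on $|\Ar|$ and the preservation of both SCOOC hypotheses under restriction to $F|_{C_a \setminus D_F(E)}$ explicit.
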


\begin{proof}
 For SCOOC-naive and $\nsa(\SCOOCnaive)$, this follows directly from the definitions. As for $\scc(\SCOOCnaive)$, the proof that SCF2 satisfies SCOOC (Theorem \ref{thm:scooc}) also establishes that $\scc(\SCOOCnaive)$ satisfies SCOOC.
 \qed
\end{proof}

We now prove the negative results shown in Table 1 (ommiting the ones that are already covered by Theorems \ref{thm:violateINRA} and \ref{thm:violateSCOOC}).

\begin{theorem}
 Naive semantics violates Directionality.
\end{theorem}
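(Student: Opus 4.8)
The plan is to refute the Directionality equation $\sigma(F|_U) = \{ E \cap U \mid E \in \sigma(F)\}$ by exhibiting one small counterexample. I would take $F = \langle \{a,b\}, \{(a,b)\} \rangle$ together with the set $U = \{a\}$. The first step is to check that $U$ is indeed unattacked: no argument attacks $a$, so $U$ satisfies the defining condition of an unattacked set. The second step is to compute $\naive(F|_U)$: the restricted framework $F|_U$ has the single argument $a$ and no attacks, so its unique maximal conflict-free set is $\{a\}$, i.e.\ $\naive(F|_U) = \{ \{a\} \}$. The third step is to compute $\naive(F)$: the conflict-free subsets of $\{a,b\}$ are $\emptyset$, $\{a\}$ and $\{b\}$ (the set $\{a,b\}$ is not conflict-free since $(a,b) \in \att$), and both $\{a\}$ and $\{b\}$ are subset-maximal among these, so $\naive(F) = \{ \{a\}, \{b\} \}$. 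Hence $\{ E \cap U \mid E \in \naive(F) \} = \{ \{a\}, \emptyset \}$, which is different from $\naive(F|_U) = \{ \{a\} \}$: the element $\emptyset$ is the projection of the genuine naive extension $\{b\}$ of $F$, yet $\emptyset$ is not a naive extension of $F|_U$. This establishes that naive semantics violates Directionality.

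There is essentially no hard step here; the only things to get right are the choice of a minimal example and the routine verification that $U$ is unattacked and that the listed sets really are the naive extensions. The conceptual point worth stating is \emph{why} such a counterexample must exist: restricting to an unattacked set can turn the projection of an extension (here $\{b\} \cap U = \emptyset$) into a set that is no longer subset-maximal in the restriction (it is properly contained in $\{a\}$), so a projected extension need not be an extension of the restricted framework. If desired, one can note for contrast that the reverse inclusion does hold — any naive extension $S$ of $F|_U$ extends to some maximal conflict-free $S' \supseteq S$ in $F$, and maximality of $S$ forces $S' \cap U = S$ — so the failure of Directionality for naive semantics is located entirely in the inclusion $\{ E \cap U \mid E \in \naive(F) \} \subseteq \naive(F|_U)$; but this remark is not needed for the theorem.
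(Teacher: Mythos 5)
Your proposal is correct and uses exactly the same counterexample as the paper: the two-argument framework with $a$ attacking $b$ and the unattacked set $U=\{a\}$, where the projection $\{b\}\cap U=\emptyset$ fails to be a naive extension of $F|_U$. The verification is routine and matches the paper's argument.
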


\begin{proof}
 A counterexample is shown in Figure~\ref{fig:simple_attack}.
\end{proof}

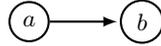
\begin{figure}[!h]
\begin{center}
\begin{tikzpicture}[->,>=stealth,shorten >=1pt,auto,node distance=1.5cm,
  thick,main node/.style={circle,draw,font=\small}]
  \node[main node] (1) {$a$};
	\node[main node] (2) [right of=1] {$b$};
  \draw[->,>=latex] (1) to (2);
	\end{tikzpicture} \\ 
\end{center}
\vspace{-7mm}
\caption{Naive semantics violates Directionality, because $a$ is not in the extension $\{b\}$, even though it is in the only extension of the unattacked subframework induced by $\{a\}$.}
\label{fig:simple_attack}
\end{figure}

\begin{theorem}
 SCOOC-naive semantics violates Directionality and INRA.
\end{theorem}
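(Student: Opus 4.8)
The plan is to exhibit two small counterexamples, one for each principle, both of which exploit the fact that SCOOC-naive semantics is \emph{not} SCC-recursive and hence looks at odd cycles globally rather than component by component.

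\textbf{Violation of Directionality.} First I would reuse the structure that already defeats the CF2-style global treatment of even cycles, namely a three-cycle feeding into a further argument. Consider $F$ with arguments $\{a,b,c,d\}$, attacks $a \to b$, $b \to c$, $c \to a$ (an odd cycle) together with $a \to d$, and take the unattacked set $U = \{a,b,c\}$. In $F|_U$ the only conflict-free set that is strongly complete outside odd cycles is the empty set (every argument lies on the odd cycle, so the ``outside odd cycles'' clause is vacuous, and $\emptyset$ is trivially subset-maximal among such sets because every nonempty conflict-free subset of a bare 3-cycle is a singleton, which is still strongly-complete-outside-odd-cycles, so actually the singletons $\{a\},\{b\},\{c\}$ are the SCOOC-naive extensions). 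So $\SCOOCnaive(F|_U) = \{\{a\},\{b\},\{c\}\}$. In the whole of $F$, however, $d$ is not on any odd cycle and $\{d\}^- = \{a\}$, so any SCOOC-naive extension $E$ of $F$ with $a \notin E$ must contain $d$; hence $\{b\}$ and $\{c\}$ are not SCOOC-naive extensions of $F$ (they would have to be extended by $d$, and $\{b,d\}$, $\{c,d\}$ are the actual extensions), while $\{a\}$ is (since $a$ attacks $d$). Thus $\{E \cap U \mid E \in \SCOOCnaive(F)\} = \{\{a\}\} \neq \SCOOCnaive(F|_U)$, so Directionality fails. I would double-check the edge case about whether $\emptyset$ or the singletons are the maximal SCOOC-naive sets of the bare 3-cycle; either way the projected set on the left is a strict subset of the right, which is all that is needed.

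\textbf{Violation of INRA.} For INRA I would look for an AF and an argument $x$ attacked by \emph{every} SCOOC-naive extension such that deleting $x$ changes the extension set. The natural candidate is a self-attacking sink hanging off an even cycle, mirroring Figure~\ref{fig:counterexCF2}: take $F$ on $\{a,b,c\}$ with the mutual/odd-cycle structure $a \to b$, $b \to c$, $c \to a$ and additionally $c \to c$ --- no, cleaner is to adapt the CF2 counterexample directly. Let $F$ have arguments $\{a,b,c\}$ with $a \leftrightarrow b$ replaced by the path $a \to b \to c \to a$ (3-cycle) plus a self-attack on $c$. Then $c$ is self-attacking, so it is never in a conflict-free set; the conflict-free sets are $\emptyset, \{a\}, \{b\}$. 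None of $a,b$ is outside an odd cycle once $c$'s self-loop is there? Here one must be careful: with the self-loop, the underlying $\att$-cycle $a\to b\to c\to a$ still exists, so $a,b,c$ are all on an odd $\att$-cycle, the SCOOC clause is vacuous for all of them, and the SCOOC-naive extensions are the maximal conflict-free sets $\{a\},\{b\}$, each of which attacks $c$. Deleting $c$ leaves $F_{-a}$ with arguments $\{a,b\}$ and the single attack $a \to b$; now $b$ is outside every odd cycle and $\{b\}^- = \{a\}$, so any SCOOC-naive extension avoiding $a$ must contain $b$, giving SCOOC-naive extensions $\{a\}$ and $\{b\}$ --- hmm, that does not change things, so I would instead choose the deletion to remove a different argument. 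The robust choice is exactly Figure~\ref{fig:counterexCF2}'s graph with $c$ self-attacking: extensions are $\{a\},\{b\}$, both attack $c$; $F_{-c}$ has $a \to b$ only, whose SCOOC-naive extensions, by the same reasoning as in Figure~\ref{fig:simple_attack} read through SCOOC, are just $\{a\}$ (since $\{b\}$ is not strongly complete outside odd cycles, as $b \notin \{b\}$-ext forces inclusion of $a$'s-non-attacker... wait, $a$ has no attacker so $a$ must be in it). Concretely $\SCOOCnaive(F_{-c}) = \{\{a\}\} \neq \{\{a\},\{b\}\} = \SCOOCnaive(F)$, so INRA fails.

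\textbf{Main obstacle.} The real care-point, and where I would spend the bulk of the verification, is computing $\SCOOCnaive$ correctly on the small graphs: the ``strongly complete outside odd cycles'' condition interacts subtly with which arguments count as lying on an odd $\att$-cycle (the definition of $\att$-path permits $a_0 = a_n$ only in the cycle case, and a self-loop is a length-1 odd cycle), and subset-maximality is taken \emph{within} the class of conflict-free SCOOC-outside-odd-cycles sets, not among all conflict-free sets. I would therefore state each counterexample together with an explicit enumeration of all conflict-free sets, a check of the SCOOC-outside-odd-cycles status of each, and the resulting maximal family, so that the reader can verify mechanically that the projected/deleted extension family differs from the directly computed one. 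Given the tiny size of the frameworks this is routine once the definitions are unwound, but it is exactly the kind of place where an off-by-one in the odd-cycle bookkeeping would sink the argument, so it warrants being spelled out rather than asserted.

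\begin{figure}[!h]
\begin{center}
\begin{tikzpicture}[->,>=stealth,shorten >=1pt,auto,node distance=1.5cm,
  thick,main node/.style={circle,draw,font=\small}]
  \node[main node] (a) at (0,0) {$a$};
  \node[main node] (b) at (1.6,0) {$b$};
  \node[main node] (c) at (0.8,0.9) {$c$};
  \node[main node] (d) at (3.2,0) {$d$};
  \draw[->,>=latex] (a) to (b);
  \draw[->,>=latex] (b) to (c);
  \draw[->,>=latex] (c) to (a);
  \draw[->,>=latex] (a) to (d);
\end{tikzpicture}
\end{center}
\vspace{-5mm}
\caption{SCOOC-naive semantics violates Directionality: $U = \{a,b,c\}$ is unattacked, $\SCOOCnaive(F|_U) = \{\{a\},\{b\},\{c\}\}$, but $\{E \cap U \mid E \in \SCOOCnaive(F)\} = \{\{a\}\}$, since $b$ and $c$ do not attack $d$ while $d \notin$ an odd cycle.}
\label{fig:scoocnaive-dir}
\end{figure}

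\begin{figure}[!h]
\begin{center}
\begin{tikzpicture}[->,>=stealth,shorten >=1pt,auto,node distance=1.5cm,
  thick,main node/.style={circle,draw,font=\small}]
  \node[main node] (a) at (0,0) {$a$};
  \node[main node] (b) at (1.6,0) {$b$};
  \node[main node] (c) at (0.8,0.9) {$c$};
  \draw[->,>=latex] (a) to (b);
  \draw[->,>=latex] (b) to (c);
  \draw[->,>=latex] (c) to (a);
  \path[->,every loop/.style={looseness=10}] (c) edge [in=120,out=60,loop] node {} ();
\end{tikzpicture}
\end{center}
\vspace{-5mm}
\caption{SCOOC-naive semantics violates INRA: both extensions $\{a\},\{b\}$ attack $c$, but $\SCOOCnaive(F_{-c}) = \{\{a\}\} \neq \{\{a\},\{b\}\}$.}
\label{fig:scoocnaive-inra}
\end{figure}
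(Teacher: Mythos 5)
Your Directionality counterexample does not work, and the error is in the key step. In your framework ($a \to b \to c \to a$ plus $a \to d$), the SCOOC clause for $d$ is \emph{vacuous}: the definition requires that no argument in $\{d\} \cup \{d\}^-$ lies on an odd $\att$-cycle, and $\{d\}^- = \{a\}$ with $a$ on the odd 3-cycle. So nothing forces $d$ into an extension, every conflict-free set is strongly complete outside odd cycles, and $\SCOOCnaive(F) = \naive(F) = \{\{a\},\{b,d\},\{c,d\}\}$. Projecting onto $U = \{a,b,c\}$ gives $\{\{a\},\{b\},\{c\}\} = \SCOOCnaive(F|_U)$, so Directionality holds on this instance. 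Note that even your own enumeration already lists $\{b,d\}$ and $\{c,d\}$ as extensions, whose traces on $U$ are $\{b\}$ and $\{c\}$; the concluding claim that the projected family is $\{\{a\}\}$ contradicts the sentence before it. (One can also check that $\{a,b,c\}$ is the only nontrivial unattacked set here, so no other choice of $U$ rescues the example.) The paper's counterexample has a different shape precisely to avoid this: $a \leftrightarrow b$, $b \to b$, $b \to c$, $c \to d$. There the odd cycle (the self-loop on $b$) shields $c$ from being forced in, but $c$ itself is \emph{not} on an odd cycle, so the SCOOC clause for $d$ is live and creates the extra extension $\{a,d\}$ of $F$, whose trace $\{a\}$ on $U=\{a,b,c\}$ is not an extension of $F|_U$ (whose unique extension is $\{a,c\}$). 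You need this two-layer structure -- an argument outside all odd cycles whose only attacker is also outside all odd cycles but is itself neutralized by an odd cycle upstream.

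The INRA half is essentially right, but only in the version you call the ``robust choice'', i.e.\ the graph with the \emph{bidirectional} edge between $a$ and $c$ ($a \to b$, $b \to c$, $c \leftrightarrow a$, $c \to c$): there both extensions $\{a\}$ and $\{b\}$ attack $c$, while $\SCOOCnaive(F_{-c}) = \{\{a\}\}$ because in $a \to b$ the unattacked $a$ is forced into every extension. However, the figure you actually drew omits the edge $a \to c$, and without it $\{a\}$ does not attack $c$, so the INRA hypothesis fails and the example proves nothing. The paper handles both principles with the single four-argument framework described above ($b$ attacked by both extensions $\{a,c\}$ and $\{a,d\}$, yet $\{a,d\} \notin \SCOOCnaive(F_{-b})$), which is tidier; your two-example strategy is fine in principle, but the Directionality example must be replaced and the INRA figure corrected.
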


\begin{proof}
 A counterexample to both principles is shown in Figure~\ref{fig:selfattack_reinstatement}.
\end{proof}

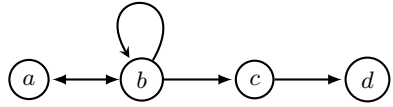
\begin{figure}[!h]
\begin{center}
\begin{tikzpicture}[->,>=stealth,shorten >=1pt,auto,node distance=1.5cm,
  thick,main node/.style={circle,draw,font=\small}]
  \node[main node] (1) {$b$};
	\node[main node] (0) [left of=1] {$a$};
	\node[main node] (2) [right of=1] {$c$};
	\node[main node] (3) [right of=2] {$d$};
  \draw[<->,>=latex] (1) to (0);
  \draw[->,>=latex] (1) to (2);
  \draw[->,>=latex] (2) to (3);
   \path[->,every loop/.style={looseness=10}] (1) edge  [in=120,out=60,loop] node {} ();  
	\end{tikzpicture} \\ 
\end{center}
\vspace{-3mm}
\caption{SCOOC-naive semantics violates Directionality, because $c$ is not in the extension $\{a,d\}$, even though it is in the only extension of the unattacked subframework induced by $\{a,b,c\}$. 
SCOOC-naive semantics violates INRA, because $b$ is attacked by both extensions ($\{a,c\}$ and $\{a,d\}$) and the extension $\{a,d\}$ is not an extension of the subframework induced by $\{a,c,d\}$.}
\label{fig:selfattack_reinstatement}
\end{figure}

\begin{theorem}
 $\nsa(\SCOOCnaive)$ semantics violates Directionality and INRA.
\end{theorem}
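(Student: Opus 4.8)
The plan is to exhibit one argumentation framework, free of self-attacking arguments, that refutes both principles at once; since this framework and all the restrictions used below contain no self-loops, on them $\nsa(\SCOOCnaive)$ agrees with $\SCOOCnaive$, so it suffices to refute Directionality and INRA for $\SCOOCnaive$. I would obtain the framework from the SCOOC-naive counterexample of Figure~\ref{fig:selfattack_reinstatement} by replacing the self-loop at $b$ with a three-cycle: let $F$ have arguments $a,b,b_1,b_2,c,d$ and attacks $a\to b$, $b\to b_1$, $b_1\to b_2$, $b_2\to b$, $b\to c$, $c\to d$. The unique cycle of $F$ is the three-cycle $b\to b_1\to b_2\to b$, so its only odd-cycle arguments are $b,b_1,b_2$, and a conflict-free set $A$ is strongly complete outside odd cycles exactly when $a\in A$ and, whenever $c\notin A$, also $d\in A$.

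The first step is to compute $\SCOOCnaive(F)$. Going through the conflict-free sets and imposing these two conditions, one finds that the subset-maximal ones are precisely $\{a,b_1,c\}$, $\{a,b_1,d\}$, $\{a,b_2,c\}$ and $\{a,b_2,d\}$; the conflict-free set $\{b,d\}$, although maximal among conflict-free sets, is not an SCOOC-naive extension because it omits the unattacked argument $a$. Note that every one of the four extensions attacks $b$, through $a$.

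For INRA I would delete $b$: in $F_{-b}$ the arguments $a$, $b_1$ and $c$ are all unattacked and no argument lies on an odd cycle, so strong completeness outside odd cycles forces $a,b_1,c$ into every extension, leaving $\SCOOCnaive(F_{-b})=\{\{a,b_1,c\}\}\neq\SCOOCnaive(F)$, which contradicts INRA. For Directionality I would take the unattacked set $U=\{a,b,b_1,b_2,c\}$. In $F|_U$ the argument $c$ is no longer kept out by the conflict with $d$, so any conflict-free strongly-complete-outside-odd-cycles set containing $\{a,b_1\}$ can be enlarged by $c$, giving $\SCOOCnaive(F|_U)=\{\{a,b_1,c\},\{a,b_2,c\}\}$; but $\{a,b_1,d\}\cap U=\{a,b_1\}$ is the projection of an SCOOC-naive extension of $F$ that does not lie in $\SCOOCnaive(F|_U)$, so the Directionality equation fails.

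The main obstacle I anticipate is the two extension computations. One has to be careful to test subset-maximality among the conflict-free sets that are strongly complete outside odd cycles, not among all conflict-free sets, and to verify the structural facts that make the example work: that replacing the self-loop by a three-cycle keeps $c$ and $d$ off every odd cycle while putting $c$'s only attacker $b$ on one (so that the ``$d$-extensions'' of $F$ genuinely exist) and keeps $a$ off every odd cycle (so that $\{b,d\}$ is correctly excluded and $b$ is attacked by all extensions). Once these facts are pinned down, both failures follow by direct inspection.
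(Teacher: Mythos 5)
Your proposal is correct: the framework $a\to b$, $b\to b_1\to b_2\to b$, $b\to c\to d$ has exactly the four SCOOC-naive extensions you list, every one attacks $b$ yet deleting $b$ collapses the extension set to $\{\{a,b_1,c\}\}$, and projecting onto the unattacked set $U=\{a,b,b_1,b_2,c\}$ produces $\{a,b_1\}$, which is not an extension of $F|_U$. This is essentially the same counterexample-based approach as the paper, the only difference being that you refute both principles with a single self-loop-free framework, whereas the paper uses two separate ones (a three-cycle feeding a chain for Directionality, and an argument attacked from outside a three-cycle for INRA).
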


\begin{proof}
 A counterexample to Directionality is shown in Figure~\ref{fig:3cycle_reinstatement}. A counterexample to INRA is shown in Figure~\ref{fig:attacked_3cycle}.
\end{proof}

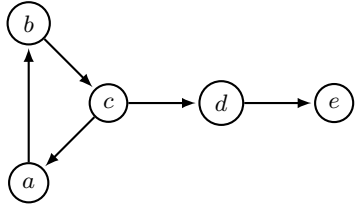
\begin{figure}[!h]
\begin{center}
\begin{tikzpicture}[->,>=stealth,shorten >=1pt,auto,node distance=1.5cm,
  thick,main node/.style={circle,draw,font=\small}]
  \node[main node] (1) {$c$};
	\node[main node] (2) [right of=1] {$d$};
	\node[main node] (3) [right of=2] {$e$};
	\node[main node] (4) [below left of=1] {$a$};
	\node[main node] (5) [above left of=1] {$b$};
  \draw[->,>=latex] (1) to (2);
  \draw[->,>=latex] (2) to (3);
  \draw[->,>=latex] (4) to (5);
  \draw[->,>=latex] (5) to (1);
  \draw[->,>=latex] (1) to (4);
	\end{tikzpicture} \\ 
\end{center}
\vspace{-7mm}
\caption{$\nsa(\SCOOCnaive)$ semantics violates Directionality, because $\{b,e\}$ is an extension, even though $\{b\}$ is not an extension of the unattacked subframework induced by $\{a,b,c,d\}$. 
}
\label{fig:3cycle_reinstatement}
\end{figure}

\begin{figure}[!h]
\begin{center}
\begin{tikzpicture}[->,>=stealth,shorten >=1pt,auto,node distance=1.5cm,
  thick,main node/.style={circle,draw,font=\small}]
  \node[main node] (1) {$a$};
	\node[main node] (2) [right of=1] {$b$};
	\node[main node] (3) [above right of=2] {$c$};
	\node[main node] (4) [below right of=2] {$d$};
  \draw[->,>=latex] (1) to (2);
  \draw[->,>=latex] (2) to (3);
  \draw[->,>=latex] (3) to (4);
  \draw[->,>=latex] (4) to (2);
	\end{tikzpicture} \\ 
\end{center}
\vspace{-7mm}
\caption{$\nsa(\SCOOCnaive)$ semantics violates INRA, because $b$ is attacked by every extension and the extension $\{a,d\}$ is not an extension of the subframework induced by $\{a,c,d\}$.}
\label{fig:attacked_3cycle}
\end{figure}
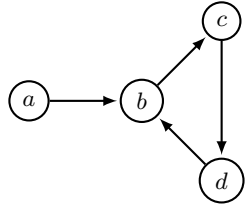

\begin{theorem}
 $\scc(\SCOOCnaive)$ semantics violates INRA.
\end{theorem}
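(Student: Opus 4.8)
The plan is to exhibit an explicit counterexample, and the natural candidate is the three-argument AF already drawn in Figure~\ref{fig:counterexCF2}: arguments $a,b,c$ with attacks $a \to b$, $b \to c$, the mutual attack $a \leftrightarrow c$, and the self-attack $(c,c)$. A counterexample is bound to involve a self-attacking argument, since $\nsa(\scc(\SCOOCnaive)) = \SCFtwo$ does satisfy INRA (as shown above), so what is being exploited here is precisely the absence of the $\nsa$-step.

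First I would analyse $F$ itself. It is a single SCC, and every argument lies on the odd $\att$-cycle $\langle a,b,c,a\rangle$ (with $c$ also on the loop $\langle c,c\rangle$), so for no argument $x$ is it the case that no argument of $\{x\}\cup\{x\}^-$ lies on an odd cycle. Hence the ``strongly complete outside odd cycles'' clause is vacuous on $F$, so $\SCOOCnaive$ coincides with $\naive$ here; the maximal conflict-free sets are $\{a\}$ and $\{b\}$ (note $\{c\}$ is not conflict-free), so $\scc(\SCOOCnaive)(F) = \SCOOCnaive(F) = \{\{a\},\{b\}\}$. Both extensions attack $c$ (via $a \to c$ and $b \to c$), so the premise of INRA holds for the argument $c$.

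Then I would compute $\scc(\SCOOCnaive)(F_{-c})$, where $F_{-c}$ is the two-argument AF whose only attack is $a \to b$. It has the two SCCs $\{a\}$ and $\{b\}$, with $\{a\}$ the unique initial one, and the only $\SCOOCnaive$-extension of $F_{-c}|_{\{a\}}$ is $\{a\}$ --- since $a$ is there unattacked and not on an odd cycle, strong completeness outside odd cycles forces $a$ in and rules out $\emptyset$. By Definition~\ref{def:SCC}, every $\scc(\SCOOCnaive)$-extension $S$ of $F_{-c}$ must then satisfy $S \cap \{a\} = \{a\}$, whence $b \in D_{F_{-a}}(S)$ (taking $a$ to be $c$ in that notation, i.e.\ $b \in D_{F_{-c}}(S)$) and the component $\{b\}$ is resolved to $\emptyset$, so $S = \{a\}$; conversely $\{a\}$ is readily checked to be an extension. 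Thus $\scc(\SCOOCnaive)(F_{-c}) = \{\{a\}\} \neq \{\{a\},\{b\}\} = \scc(\SCOOCnaive)(F)$, contradicting INRA. The only step needing a little care is this last computation, i.e.\ seeing that deleting $c$ genuinely destroys the extension $\{b\}$ because the initial component $\{a\}$ of $F_{-c}$ must be resolved in favour of $a$; it is also worth remarking why the self-loop is essential, namely that in the loop-free variant (Figure~\ref{fig:counterex1}) the argument $c$ is not attacked by all extensions and so the premise of INRA never applies there.
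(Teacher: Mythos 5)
Your proposal is correct and takes essentially the same approach as the paper: the paper's own counterexample (Figure~\ref{fig:3cycle_with_selfattack}, a directed 3-cycle with one mutual attack and a self-loop on the argument to be removed) is isomorphic to the AF from Figure~\ref{fig:counterexCF2} that you use, under the renaming that sends the paper's self-attacker $a$ to your self-attacker $c$. Your detailed verification of both sides of the computation, and the remark on why the self-loop is essential, merely spell out what the paper leaves to the figure caption.
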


\begin{proof}
 A counterexample is shown in Figure~\ref{fig:3cycle_with_selfattack}.
\end{proof}

\begin{figure}[!h]
\begin{center}
\begin{tikzpicture}[->,>=stealth,shorten >=1pt,auto,node distance=1.5cm,
  thick,main node/.style={circle,draw,font=\small}]
  \node[main node] (1) {$a$};
	\node[main node] (2) [above right of=1] {$b$};
	\node[main node] (3) [below right of=2] {$c$};
  \draw[<->,>=latex] (1) to (2);
  \draw[->,>=latex] (2) to (3);
  \draw[->,>=latex] (3) to (1);
   \path[->,every loop/.style={looseness=10}] (1) edge  [in=210,out=150,loop] node {} ();  
	\end{tikzpicture} \\ 
\end{center}
\vspace{-7mm}
\caption{$\scc(\SCOOCnaive)$ semantics violates INRA because $a$ is attacked by every extension and the extension $\{c\}$ is not an extension of the subframework induced by $\{b,c\}$.}
\label{fig:3cycle_with_selfattack}
\end{figure}
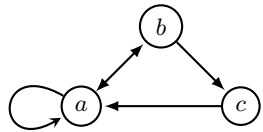
}

\section{Empirical cognitive studies}
\label{sec:empirical}
Rahwan~et~al.~\cite{rahwan2009argumentation} argue that Artificial Intelligence research will benefit from the interplay between logic and cognition and that therefore ``logicians and computer scientists ought to give serious attention to cognitive plausibility when assessing formal models of reasoning, argumentation, and decision making''. Based on the observation that in the previous literature on formal argumentation theory, an example-based approach and a principle-based approach were used to motivate and validate argumentation semantics, they propose to complement these approaches by an \emph{experiment-based approach} that takes into account empirical cognitive studies on how humans interpret and evaluate arguments. They made a first contribution to this new approach by presenting and discussing the results of two such studies that they conducted in order to test the cognitive plausibility of simple and floating reinstatement \cite{rahwan2009argumentation}.

While the argumentation frameworks used in Rahwan et al.'s studies could not distinguish between preferred semantics and naive-based semantics like CF2, two more recent studies by Cramer and Guillaume \cite{cramer2018empirical,cramer2019empirical} address this issue. Both of these studies made use of a group discussion methodology that is known to stimulate more rational thinking. According to the results of the first study \cite{cramer2018empirical}, CF2, SCF2, stage and stage2 semantics are significantly better predictors for human judgments on the acceptability of arguments than admissibility-based semantics like grounded, preferred, complete or semi-stable (binomial tests, all $p$-values $< 0.001$). However, this study did not involve argumentation frameworks that allow to distinguish between CF2, SCF2, stage and stage2 semantics. 

According to the results of Cramer and Guillaume's second study \cite{cramer2019empirical}, SCF2, CF2 and grounded semantics are better predictors for human judgments on the acceptability of arguments than stage, stage2, preferred or semi-stable semantics (binomial tests, all $p$-values $< 0.001$). Additionally, the results suggest that SCF2 is a better predictor than CF2 and grounded semantics, but the results for that are not significant. We will now explain these results in more depth.

As explained in Section \ref{sec:principles}, Dvo\v{r}\'ak and Gaggl~\cite{Dvorak16} critique a feature of CF2 semantics, namely that in the case of a six-cycle\onlyreport{ as depicted in Figure \ref{fig:6cycle}}, CF2 allows two opposite arguments \onlyreport{(e.g. $a$ and $d$)} to be accepted together. The second study by Cramer and Guillaume \cite{cramer2019empirical} confirms that this criticism is in line with human judgments of argument acceptability. We briefly summarize the data on which this judgment is made (a more detailed explanation can by found in \cite{cramer2019empirical}): Based on the overall responses of the participants in the study, Cramer and Guillaume point out that 12 of the 61 participants of their study have a high frequency of incoherent responses, so that they disconsider them from the further analysis. Among the remaining 49 participants, 22 follow a simple cognitive strategy of marking arguments as \emph{Undecided} whenever there is a reason for doubt (in line with the grounded semantics), while 27 participants do not follow this strategy. Cramer and Guillaume call these 27 participants the \emph{coherent non-grounded participants}. 

In the case of 11 out of the 12 argumentation frameworks considered in the study, the majority of these 27 coherent non-grounded participants make judgements that are in line with CF2 semantics. The only exception to this is an argumentation framework involving a six-cycle, in which only 33\% of the coherent non-grounded participants make a judgement in line with CF2 semantics, while 60\% make a judgements that is in line with SCF2, stage2, preferred and semi-stable semantics.

Dvo\v{r}\'ak and Gaggl~\cite{Dvorak16} themselves had used this criticism against CF2 to motivate their stage2 semantics, but in the study by Cramer and Guillaume \cite{cramer2019empirical}, stage2 performed significantly worse than SCF2, since all five arguments in the only AF on which stage2 and SCF2 differed were evaluated by most participants (including most coherent non-grounded participants) in line with SCF2 rather than with stage2.


In combination with the principle-based argument for SCF2 presented in the previous two sections, these preliminary findings provide additional support for our hypothesis that SCF2 corresponds well to what humans consider a rational judgment on the acceptability of arguments.

\section{Related work}
\label{sec:related}
In the previous section we have already considered related empirical work. In this section we focus on work related to the principle-based approach to abstract argumentation that we have employed in this paper.

The principle-based analysis of argumentation semantics was initiated by Baroni and Giacomin \cite{baroni2007principle} to choose among the many extension-based argumentation semantics that have been proposed in the formal argumentation literature. The handbook chapter of van der Torre and Vesic \cite{vanderTorre18} gives a classification of fifteen alternatives for argumentation semantics using  twenty-seven principles discussed in the literature on abstract argumentation. Dvo\v{r}\'ak~and~Gaggl~\cite{Dvorak16} introduce stage2 semantics by showing how it satisfies various desirable properties, similarly to how we motivate SCF2 semantics in this paper. 

Moreover, additional extension-based argumentation semantics and principles have been proposed by various authors. For example, Besnard {\em et al.} \cite{DBLP:conf/comma/BesnardDHL16} introduce a system for specifying semantics in abstract argumentation called SESAME. Moreover, many principles have been proposed for alternative semantics of argumentation frameworks such as ranking semantics \cite{amgoud2013ranking}, and for extended argumentation frameworks, for example for abstract dialectical frameworks \cite{brewka2018abstract}.

The principle of Irrelevance of Necessarily Rejected Arguments is closely related to the well-studied area of dynamics of argumentation, in which also various principles have been proposed which are closely related to INRA. 
Cayrol {\em et al.} \cite{DBLP:conf/kr/CayrolSL08} were maybe the first to study revision of frameworks using a principle-based analysis, and they have been related to notions of equivalence \cite{DBLP:journals/ai/Baumann12,DBLP:journals/ai/OikarinenW11}.
Boella {\em et al.} \cite{DBLP:conf/ecsqaru/BoellaKT09} define principles for abstracting (i.e., removing) an argument, and Rienstra {\em et al.} \cite{DBLP:conf/tafa/RienstraST15} define a variety of persistence and monotony properties for argumentation semantics. Our INRA principle is inspired by and closely related to the \emph{skeptical IO monotony principle} that they define. The difference is that their principle considers adding an attack rather than removing an argument.

\section{Conclusion and Future Work}
\label{sec:conclusion}
Motivated by empirical cognitive studies on argumentation semantics, we have introduced a new naive-based argumentation semantics called SCF2. A principle-based analysis shows that it has two distinguishing features:
\begin{enumerate}
\item If an argument is attacked by all extensions, then it can never be used in a dialogue and therefore it has no effect on the acceptance of other arguments. 
We call it {\em Irrelevance of Necessarily Rejected Arguments}. 
\item Within each extension, if none of the attackers of an argument is accepted and the argument is not involved in a paradoxical relation, then the argument is accepted. We define paradoxicality as being part of an odd cycle, and we call this principle {\em Strong Completeness Outside Odd Cycles}.
\end{enumerate}

We have argued that these features together with the findings from empirical cognitive studies make SCF2 a good candidate for an argumentation semantics that corresponds well to what humans consider a rational judgment on the acceptability of arguments.

The empirical approach to abstract argumentation theory is still a relatively new approach that needs to be developed further by modifying and improving the methodology of existing studies in the design of future studies. The current paper provides a well-motivated hypothesis that can be tested more rigorously in future empirical studies, namely the hypothesis that SCF2 predicts human judgments on the acceptability of arguments better than other abstract argumentation semantics.

On the theoretical side, more work is required to determine which other principles studied in the literature are satisfied by SCF2. Moreover, dialogue-based decision procedures must be defined, and the complexity of the various decision problems must be established. Finally, an extension towards structured argumentation should be investigated.


\bibliographystyle{myplain}
\bibliography{bib}
\end{document}